\documentclass{svjour3}
\usepackage{geometry}
\geometry{letterpaper,      left       = 1.0in,       right      = 1.0in,       top        = 1.0in,     bottom     = 1.0in}
\raggedbottom 

\renewcommand{\qed}{\hfill\rule{1ex}{1ex}}
\usepackage{color}
\usepackage{amsfonts,amsmath,amssymb}
\usepackage{graphicx}
\usepackage{epsfig,tikz}

\usepackage{pgfplots}
\pgfplotsset{compat=1.11}

\linespread{1.3} 
\setlength{\parskip}{4pt} 

\newtheorem{prop}{Proposition}

\newcommand{\comment}[1]{}

\newcommand{\noauthor}[1]{\author{ Anonymous}}
\newcommand{\nothanks}[1]{}
\newcommand{\noauthorrunning}[1]{}
\newcommand{\noinstitute}[1]{}

\begin{document}

\title{Optimal Decision Trees for Categorical Data via Integer Programming
\thanks{
 The work of Katya Scheinberg was partially supported by NSF Grant CCF-1320137. Part of this work was performed while 
 Katya Scheinberg was on sabbatical leave at IBM Research, Google,  and  University of Oxford, partially supported by the Leverhulme Trust. }
}

\titlerunning{Optimal Decision Trees }        

\author{       Oktay G{\"u}nl{\"u}k \and Jayant Kalagnanam \and Minhan Li \and Matt Menickelly \and  Katya Scheinberg}

\authorrunning{     G{\"u}nl{\"u}k, Menickelly, Li,  Kalagnanam,    Scheinberg}

\institute{\\   \and    
            Oktay G{\"u}nl{\"u}k,    IBM  Research,   \email{gunluk@us.ibm.com}  \\\and
            Jayant Kalagnanam,   IBM  Research,   \email{jayant@us.ibm.com}       \\\and
            Minhan Li, Lehigh University, \email{mil417@lehigh.edu}\\\and
            Matt Menickelly,  Argonne National Laboratory,   \email{mmenickelly@anl.gov }    \\\and 
            Katya Scheinberg, Cornell University,   \email{katyas@cornell.edu}
}

\date{\today}

\maketitle

\begin{abstract}
Decision trees have been a very popular class of predictive models for decades due to their interpretability and good performance on categorical features. However, they are not always robust and tend to overfit the data. Additionally, if allowed to grow large, they lose interpretability. In this paper, we present a mixed integer programming formulation to construct optimal decision trees of a prespecified size. We take the special structure of categorical features into account and allow combinatorial decisions  (based on subsets of values of features) at each node. Our approach can also handle numerical features via thresholding. We show that very good accuracy can be achieved with small trees using moderately-sized training sets. The optimization problems we solve are tractable with modern solvers. 
\keywords{Decision Trees \and Integer Programming  \and Machine Learning \and Binary Classification}
\end{abstract}

\large
\section{Introduction}

\usetikzlibrary{arrows}
\definecolor{burgundy}{rgb}{0.5, 0.0, 0.13}
\tikzset{
	treenode/.style = {align=center,  very thick},
	bucket/.style = {treenode,  rounded corners, inner sep=.1cm, rectangle, black,  draw=black,    fill=gray!50, text width=1em, text height=1em},
	bucketB/.style = {treenode,  rounded corners, inner sep=.1cm, rectangle, black,  draw=black, text width=1em, text height=1em},
	decision/.style = {treenode, circle, burgundy, draw=burgundy},
	topstart/.style = {treenode,   minimum width=0.5em, minimum height=1.5em}
}
Interpretability has become a well-recognized goal for machine learning models as they push further into domains such as medicine, criminal justice, and business. 
In many of these applications machine learning models complement domain experts and for human decision-makers to trust these models, interpretability is crucial.
Decision trees have been a very popular class of predictive models for decades due to their interpretability and good performance on categorical features. 
Decision trees (DTs, for short) are similar to flow-charts as they  apply a sequence of binary tests or \emph{decisions} to predict the output label of the input data.
As they can be easily interpreted and applied by non-experts, DTs are considered as one of the most widely used tools of machine learning and data analysis 
(see the recent survey \cite{Kotsiantis2013} and references therein).
Another advantage  of DTs is that they often naturally result in feature selection, as only a part of the input is typically used in the decision-making process. 
Furthermore, DTs can work with both numerical and categorical data directly, which is not the case for  \emph{numerical classifiers} such as  linear classifiers or neural networks, 
as these methods require the data to be real-valued (and ordinal).
For example, if a categorical feature can take three values such as $(i)$ red,  $(ii)$ blue, or,  $(iii)$ yellow, 
it is often represented by a group of three binary features such that one of these features  takes the value $1$ while the other two are $0$.
A numerical classifier would treat this group of three features independently where any combination of 0/1 values are possible -  ignoring the valuable information 
that only three values for the triplet are possible. 
Numerical classifiers typically recover this lost information by observing enough data and fitting the model accordingly.
However, this is not a trivial task, and may require a more complex model than what is really necessary. 
In comparison, DTs can explicitly deal with categorical features.

There are also known disadvantages to DT predictors.
For example, they are not always robust, as they might result in poor prediction on out-of-sample data when the tree is grown too large.
Hence, small trees are often desirable to avoid overfitting and also for the sake of interpretability.
 Assuming that for a given data distribution there exists a small DT that can achieve good accuracy,
the small DTs that are computed by a typical recursive DT algorithm (such as CART \cite{Breimanbook,Quinlan})  may not achieve
such accuracy, due to the  heuristic nature of the algorithm. 
Moreover, it is usually impossible to establish a bound on the difference between the expected accuracy of the DT produced by a heuristic algorithm and the best possible DT.

Currently, popular algorithms used for constructing DTs (such as CART or C4.5) are sequential heuristics that first construct a tree and then trim (prune) it to reduce its size, 
see \cite{Kotsiantis2013}.
When building the tree, these heuristics use various criteria to choose a feature and a condition on that feature to branch on. 
As the tree is built gradually, the resulting DT is not necessarily ``the best'' for any particular global criterion.
One recent example of this fact is the winning entry  \cite{fico_oktay} in the FICO interpretable machine learning competition  \cite{FICO}. The authors of \cite{fico_oktay} construct a simple classifier in conjunctive normal form which in fact can also be seen as a small depth decision tree. The authors show that their classifier is both simpler and more accurate (on test data) than the trees constructed by CART.

In this paper, we aim to find {\em optimal} small DTs for binary classification problems 
that produce \emph{interpretable} and \emph{accurate} classifiers for the data for which such classifiers exist.
We call a DT optimal if it has the best possible classification accuracy on a given training dataset.
 We allow complex branching rules using subsets of values of categorical features. 
 For example, if a categorical feature represents a person's marital status
and can take the values ``single", ``married",``divorced", ``widowed", or ``has domestic partner", 
a simple branching rule, which looks at numerical representation of the features, will make decisions based on a feature being  ``single" or not,
while a more appropriate decision may be ``either married or has a domestic partner" or not. 
Such combinatorial branching rules are considered desirable and  in the case of binary classification using CART, branching on the  best subset values of a categorical feature can be done again according to a sequential local heuristic.
On the other hand, combinatorial branching may lead to overfitting when a categorical variable can take a large number of values.
If the categorical variable that can take $\ell$ values, then, there are $2^\ell-2$ possible subsets of values of this feature that can be used for branching. 
To avoid overfitting,  our model allows bounding the size of the subset used for branching.


 While finding an optimal DT (even without the combinatorial decisions) is known to be an NP-hard problem \cite{hyafil_rivest}, we show that with careful modeling,
the resulting integer programs can be solved to optimality in a reasonable amount of time using commercial solvers such as Cplex.
Moreover, since we directly optimize the empirical loss of a DT in our model, even suboptimal feasible solutions tend to yield classifiers that outperform those learned by other DT algorithms. 
In particular, we consider a binary classification problem, which means that the output nodes (leaves) of our DTs
generate binary output. 
Our problem formulation takes particular advantage of this fact. 
Also, while our formulation can be generalized to real-valued data, it is designed for the case when the input data is binary. 
Hence, we will consider input data as being a binary vector with the property that features are grouped so that only one feature can take the value $1$ in each group for each data sample.
Our formulation explicitly takes this structure into account as we allow branching on any subset of the values of that feature. 
To our knowledge such generalized rules have not been addressed by any algorithm aiming at constructing of optimal trees, such as a recent 
method proposed in \cite{bertsimas_dunn}, which we will discuss in the next section. 

In this paper, we focus on constructing small DTs with up to four levels of decisions, which makes the resulting model
clearly interpretable and easily usable by humans. 
Our formulation, in principle, can work for binary trees of any topology; however, as we will show in our computational results,
 trees of more complex topologies are much more time consuming to train and require larger training sets to avoid overfitting. 
 The purpose of this paper is to show that if an accurate small (interpretable) tree exists for a given data set, it can be obtained in a reasonable 
 time by our proposed model, while popular heuristic methods such as C4.5 \cite{Quinlan} and random forests \cite{BreimanRF}
 tend to produce less accurate and less interpretable trees. 
 We note that even though we mostly focus on categorical features, our approach can easily handle numerical features via tresholding. We discuss how to do this this later  and also present numerical experiments with data sets with both categorical and numerical features.

The key approach we pursue is to formulate the DT training problem as a mixed-integer optimization problem that is specially designed to handle categorical variables. 
We then propose several modifications that are intended to aid a branch and bound solver, e.g. symmetry breaking. 
We also consider an extension to a formulation
that directly constrains either training sensitivity or training specificity and then maximizes the other measure. 

The rest of the paper is organized as follows: 
First, in Section~\ref{sec:related_work}, we discuss related work in using integer formulations for practical machine learning.
Then, in Section~\ref{sec:setting}, we describe the main ideas of our approach and the structure of the data for which
the model is developed. 
In Section \ref{sec:IP_section} we describe an initial IP model and several techniques
for strengthening this formulation. 
We present some computational results and comparisons in Section \ref{sec:comp}. 

\section{Related Work}\label{sec:related_work}
The idea of solving decision trees to optimality given a fixed topology is hardly new.
In \cite{Breimanbook} from 1984, the authors discuss the ``one-step optimality'' of inductive (greedy) tree algorithms,
and how one would ideally prefer an ``overall optimal'' method wherein the tree is learned in one step (such as the one we explore in this paper). 
The authors remark that this is analogous to a ``best subset selection'' procedure of linear regression, 
and continue to say that ``At the current stage of computer technology, an overall optimal tree growing procedure does not appear feasible for any
reasonably sized dataset''. 
In \cite{murthy_salzberg}, the authors detail what they call the ``look-ahead pathology'' of greedy tree learning algorithms, lending further evidence
of possible failures of greedy one-step methods. 

In the 1990s several papers considered optimization formulations for optimal decision tree learning, but deliberately relaxed the inherently integer nature of the problem. 
In particular, in \cite{bennett_blue1}, a large-scale linear optimization problem, which can be viewed as a relaxation, is solved to global optimality
via a specialized tabu search method over the extreme points of the linear polytope. 
In \cite{bennett_blue2}, a similar formulation is used, but this time combined with the  use of support-vector machine techniques such as generalized kernels for multivariate decisions,
yielding a convex nonlinear optimization problem which admits a favorable dual structure. 
More recent work \cite{norouzi_et_al} has employed a stochastic gradient method to minimize a continuous upper bound on misclassification error made by a deep decision tree. 
None of these methods, though, guarantee optimal decision trees, since they do not consider the exact (integer) formulations, such as the one discussed in this paper. 

Recently, in  \cite{bertsimas_dunn}, an integer model for optimal decision trees has been proposed. 
The key difference with the model in this paper is that \cite{bertsimas_dunn} does not target categorical variables and, hence, does not exploit the resulting combinatorial structure. Moreover, all features are treated as real-valued ones, hence a categorical feature is replaced by several binary features, and two possible models are proposed. 
The first uses arbitrary linear combinations of features, and, in principal, is more general than what we propose here, but results in a loss of interpretability.
The second uses the value of one feature in each branching decision, and hence is less general than the model in this paper. 
Additionally, we focus on binary classification problems whereas \cite{bertsimas_dunn} presents a formulation for  multi-class classification. 
Rather than fixing a tree topology, as we do, they propose tuning a regularization parameter in the objective;
as the parameter magnitude increases, more leaf nodes may have no samples routed to them, effectively yielding shallower trees. 
We note that this does not simplify the underlying optimization problem, and moreover requires tuning parameters in a setting where the training of models is computationally non-negligible,
and the effect of the choice of regularization parameter on the tree topology cannot be known a priori. In fact, in the computational results of \cite{bertsimas_dunn}, the depth is often fixed. 
Finally, unlike the work in \cite{bertsimas_dunn}, we not only propose a basic model that specifically exploits the categorical nature of the features, but we also propose several 
modifications of the model that produce stronger formulations and improve the efficiency of the branch and bound solver. 

We would now like to remark on other relevant uses of integer optimization in classification settings. 
In particular, \cite{wang_rudin} considered the problem of learning optimal ``or's of and's'', which fits into the problem of learning optimal 
disjunctive normal forms (DNFs), where optimality is measured by a trade-off between the misclassification rate and the number of literals that appear in the ``or of and's''.
The work in \cite{wang_rudin} remarks on the relationship between this problem and learning optimal decision trees.
In \cite{wang_rudin},
for the sake of computational efficiency, the authors ultimately resort to
optimally selecting from a subset of candidate suboptimal DNFs learned by  heuristic means rather than solving their proposed mixed-integer optimization problem. 
Similarly, \cite{malioutov_varshney} proposes learning DNF-like rules via integer optimization, and propose a formulation that can be viewed as boolean compressed sensing, lending  theoretical credibility to solving a linear programming relaxation of their integer problem. 
Another integer model that minimizes misclassification error by choosing general partitions in feature space was proposed in \cite{bertsimas_shioda}, but when solving the model,
global optimality certificates were not easily obtained on moderately-sized classification datasets, and the learned partition classifiers rarely outperformed CART, according
to the overlapping author in \cite{bertsimas_dunn}.
Finally,  a column generation based mixed-integer programming approach to construct optimal DNFs was recently proposed in  \cite{fico_oktay}. 
	This approach seems to work quite well on several binary classification datasets including the FICO challenge data \cite{FICO}. 

\section{Setting}\label{sec:setting}

In this paper we  consider datasets of the form $\{(g^i_1,\ldots,g^i_t,y^i):i\in 1,2,\dots, N\}$ where $g^i_j\in G_j$ for some finite set $G_j$ for $j=1,\ldots,t$, and $y^i\in\{-1,+1\}$ is the class label associated with a negative or positive class, respectively. 
For example, if the data is associated with a  manufacturing process with $t$ steps, then each  $G_j$ may correspond to a collection of different tools that can perform the $j$th step of the production process and the label may denote whether  the resulting product meets certain quality standards or not.
The classification problem associated with such an example is to estimate the label of a new item based on the particular different step-tool choices used in its manufacturing.
Alternatively, the classification problem can involve estimating whether a student will succeed in graduating from high school based on features involving gender, race,  parents marital status, zip-code and similar information.

Any (categorical) data of this form can alternatively be represented by a binary vector so that  $g^i_j\in G_j$ is replaced by a unit vector of size $|G_j|$  where the only non-zero entry in this vector indicates the particular member of $G_j$ that the data item contains.
In addition, a real-valued (numerical) feature can be, when appropriate, made into a categorical one by  ``bucketing" - that is breaking up the range of the feature into segments and considering segment membership as a categorical feature.
This is commonly done with features such as  income or age of an individual.
For example, for advertising purposes websites typically represent users by age groups such as ``teens", ``young adults", ``middle aged", and ``seniors" instead of actual age.

\begin{figure} \label{fig:dectree} \caption{A decision tree example} 
\centering
\begin{tikzpicture}[scale=0.8,->,>=stealth',level/.style={sibling distance = 11cm/#1,  level distance = 1.5cm}] 
\node [topstart] {$(a_1,a_2,a_3,a_4,a_5,a_6)$}
child{node [decision] {1} 
	child{ node [decision] {2} 
		child{ node [bucket] {1}       child{ node [topstart] {(1,0,0,0,0,1)} }  	edge from parent node[above left]  {$a_6$}           }
		child{ node [bucketB] {2}          edge from parent node[above right]  {$\neg a_6=a_5$}        }                            
		edge from parent node[above left]  {$a_1\vee a_2$}    
	}
	child{ node [decision] {3}
		child{ node [bucket] {3}       child{ node [topstart] {(0,0,1,0,1,0)} } 	edge from parent node[above left]  {$a_3$}            }
		child{ node [bucketB] {4}       	edge from parent node[above right]  {$\neg a_3=a_4$}           }
		edge from parent node[above right]  {$\neg (a_1\vee a_2)$}   	
	}
}
; 
\end{tikzpicture}
\end{figure}
The non-leaf nodes in a decision tree are called the \emph{decision nodes} where a binary test is applied to data items.
Depending on the results of these tests, the data item is routed to one of the {\em leaf} nodes.
Each leaf node is given a binary label  that determines the label assigned to the data by the DT.
The binary tests we consider are of the form ``does the $j$th feature of the data item belong to set  $\bar G_j$?'', where $\bar G_j\subseteq G_j $.
If the categorical data is represented by a binary vector, then the test becomes checking if at least one of the indices from a given collection contains a 1 or not.

As a concrete example, consider the tree in Figure \ref{fig:dectree} applied to binary vectors $a\in\{0,1\}^6$ 
whose elements are divided into  two groups: $\{a_1, a_2, a_3, a_4\}$ and $\{a_5,  a_6\}$ corresponding to two categorical features in the original data representation. 
The branching decision at node $1$ (the root), is based on whether one of $a_1$ or $a_2$ is equal to $1$. 
If true, a given data sample is routed to the left, otherwise (that is, if both $a_1$ and $a_2$ are $0$), the sample is routed to the right. 
The branching at nodes $2$ and $3$ (the two children of node $1$) are analogous and are shown in the picture. 
We can now see that data sample $a^1=(1, 0, 0, 0, 0, 1)$ is routed to leaf node $1$  and data sample $a^2=(0,0,1, 0, 1,0)$ is routed to leaf node $3$. 
The labels of the leaf nodes are denoted by the colors white and gray in  Figure \ref{fig:dectree}.

Formally, a DT is defined by $(i)$ the topology of the tree, $(ii)$   binary tests applied at each decision node, and, $(iii)$  labels assigned to each leaf node.
Throughout the paper we consider tree topologies where a decision node either has two leaf nodes or else has two other decision nodes as children.
Note that  decision trees defined this way are inherently symmetric objects, in the sense that the same DT can be produced by different numberings of the decision and leaf nodes as well as different labeling of the leaf nodes and the binary tests applied at the decision nodes. For example, reversing the  binary test from $(a_6)$ to $(\neg a_6)$ in decision node 2, and at the same time  flipping the labels of the leaf nodes 1 and 2, results in an identical DT. More generally, it is possible to  reverse the  binary test at any decision node and ``flip" the  subtrees rooted at that node to obtain the same tree.

The optimization problem we consider in the next section starts with a given tree topology and finds the best binary tests (and labels for the leaf nodes) to classify  the test data at hand with minimum error. 
Due to the symmetry discussed above, we can fix the labeling of the leaf nodes at the beginning of the process and the problem reduces to finding the best binary tests, or equivalently, choosing a categorical feature and a subset of its realizations at each decision node.
Therefore, the optimization problem consists of assigning a binary test to each decision node so as to maximize the number of {correctly classified}  samples in the \emph{training set}. We say that 
the classification of the $i$th sample is \emph{correct}  provided the path the $i$th sample takes through the tree starting from the root node ends at a leaf corresponding to the correct label.
The ultimate goal of the process, however, is to obtain a DT that will classify new data well, i.e., we are actually concerned with the generalization ability of the resulting DT.

Notice that given two tree topologies such that one is a minor of the other (i.e. it can be obtained from the other by deleting  nodes and contracting edges),  the classification error of the best DT built using the larger tree will always be at least as small as that of the smaller tree. 
Consequently, for optimization purposes, larger trees  always perform better than any of its minors.
However, larger trees generally result in more computationally challenging optimization problems.
In addition, smaller trees are often more desirable for classification purposes as they are more robust 
and are easier to interpret.


\section{Integer Programming Formulation}
\label{sec:IP_section}
In this section, we first present the basic integer programming formulation 
and then describe some enhancements  to improve its computational efficiency.
We initially assume that the topology of the binary tree is given (see Figure~\ref{fig:tree}) and therefore the number of decision and leaf nodes as well as how these nodes are connected is  known. 
We will then describe how to pick a good topology.
The formulation below models how the partitioning of the samples is done at the decision nodes, and which leaf node each sample is routed to as a result.

We begin by introducing the notation.
Let the set of all samples be indexed by $I = \{1,2,\dots,N\}$, let $I_{+}\subset I$
denote the indices of samples with positive labels 
and let $I_{-}=I\setminus I_{+}$ denote the indices of the samples with negative labels.
Henceforth, we assume that for each sample the input data is transformed into a binary vector where each categorical feature is represented by a unit vector that indicates the realization of the categorical feature.
With some abuse of terminology,  we will now refer to the entries of this binary vector as ``features'', and the collection of these 0/1 features that are associated with the same categorical feature as ``groups''.
Let the set of groups be indexed by $G=\{1,2,\dots,|G|\}$ and the set of the 0/1 features be indexed by $ J =\{1,2,\dots,d\}$.
In addition, let  $g(j)\in G$ denote  the group that contains feature $j\in J$ and let $J(g)$ denote the set of features that are contained in group $g$.
Let the set of decision nodes 
be indexed by $K=\{1,2,\dots,|K|\}$ and the set of leaf nodes be indexed by $B=\{1,2,\dots,|B|\}$.
We denote the indices of leaf nodes with positive labels by $B_{+}\subset B$ and the indices of
leaf nodes with negative labels by $B_{-} = B\setminus B_{+}$.
For convenience,
we let $B_{+}$ contain even indices, and $B_{-}$ contain the odd ones.

\subsection{The basic formulation}\label{sec:basic}

We now describe our key decision variables and the constraints on these variables.
We use binary variables $v_g^k\in\{0,1\}$ for  $g\in G$ and $ k\in K$ to denote if group $g$ is selected for branching at decision node $k$.
As discussed in Section \ref{sec:setting}, exactly one group has to be selected for branching at a decision node; consequently, we have the following set of constraints:
\begin{equation}
\label{eq:one-group-per-node}
\displaystyle\sum_{g\in G} v_g^k = 1 \quad \forall k\in K.
\end{equation}

The second set of  binary variables $ z_j^k\in\{0,1\}$ for  $j\in J$ and $ k\in K$ are used to denote if feature $j$ is one of the selected features  for branching at a decision node $k$. 
Clearly,  feature $j\in J$ can be selected only if the group containing it, namely  $g(j),$ is selected at that node.
Therefore, for all  $ k\in K$ we have the following set of constraints:
\begin{equation}
\label{eq:decision_hierarchy}
z_j^k \leq v^k_{g(j)} \quad \forall j\in J,
\end{equation}
in the formulation.
Without loss of generality, we use the convention that if a sample has one of the selected features at a given node, it follows the left branch at that node; otherwise it follows the right branch.

Let
\begin{eqnarray*}
	S&=&\Big\{(v,z)\in\{0,1\}^{|K|\times|G|}\times\{0,1\}^{|K|\times d}:
	~~~(v,z)  \text{ satisfies inequalities \eqref{eq:one-group-per-node} and \eqref{eq:decision_hierarchy}}\bigg\},
\end{eqnarray*}
and note that for any $(v,z)\in S$ one can construct a corresponding decision tree in a unique way and vice versa.
In other words, for any given $(v,z)\in S$ one can easily decide which leaf node each sample is routed to.
We next describe how to relate these variables (and therefore the corresponding decision tree) to the samples.

We use binary variables $c_b^i\in\{0,1\}$ for $b\in B$ and $i\in I$ to denote if sample $i$ is routed to leaf node $b$.
This means that  variable $c_b^i$  should take the value 1 only when sample $i$ exactly follows the unique path in the decision tree that leads to  leaf node $b$.
With this in mind, we define the expression
\begin{equation}
\label{l_definition}
L(i,k) = \displaystyle\sum_{j\in J} a_j^iz_j^k\quad \forall k\in K,~\forall i\in I,
\end{equation}
and make the following observation:

\begin{prop}\label{prop1}
	Let $(z,v)\in S$. Then, for all $i\in I$ and $k\in K$ we have $L(i,k)\in\{0,1\}$ . 
	Furthermore, $L(i,k)=1$ if and only if there exists some $j\in J$ such that  $a_{j}^i=1$ and $z_{j}^k=1$.
\end{prop}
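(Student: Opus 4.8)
The plan is to combine the two structural ingredients available: the constraints \eqref{eq:one-group-per-node} and \eqref{eq:decision_hierarchy} that define $S$ and govern the decision variables, together with the one-hot encoding of the data established in Section~\ref{sec:setting}. The latter guarantees that for every sample $i\in I$ and every group $g\in G$ exactly one feature of that group is active, that is $\sum_{j\in J(g)}a_j^i=1$. This identity is a property of the transformed data $a$ rather than part of the definition of $S$, and recognizing that it is precisely what is needed to control $L(i,k)$ is the main point of the argument.

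I would first dispose of integrality and nonnegativity essentially for free: each summand $a_j^i z_j^k$ in \eqref{l_definition} is a product of two $\{0,1\}$-valued quantities and hence lies in $\{0,1\}$, so $L(i,k)$ is a sum of $\{0,1\}$ terms and is therefore a nonnegative integer.

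For the upper bound I would regroup the sum over $j\in J$ by group, writing $L(i,k)=\sum_{g\in G}\sum_{j\in J(g)}a_j^i z_j^k$. Fixing a group $g$, constraint \eqref{eq:decision_hierarchy} gives $z_j^k\le v_g^k$ for every $j\in J(g)$, so that $\sum_{j\in J(g)}a_j^i z_j^k\le v_g^k\sum_{j\in J(g)}a_j^i=v_g^k$, where the final equality is exactly the one-hot identity. Summing over $g$ and using constraint \eqref{eq:one-group-per-node} then gives $L(i,k)\le\sum_{g\in G}v_g^k=1$. Together with the previous step this forces $L(i,k)\in\{0,1\}$, establishing the first claim.

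The \emph{furthermore} equivalence then follows with no extra work. Because $L(i,k)$ is a sum of $\{0,1\}$-valued terms bounded above by $1$, it equals $1$ if and only if at least one of those terms equals $1$; and a term $a_j^i z_j^k$ equals $1$ exactly when $a_j^i=1$ and $z_j^k=1$. Hence $L(i,k)=1$ iff there is some $j\in J$ with $a_j^i=1$ and $z_j^k=1$. I do not expect any genuine obstacle here: the only thing one must be careful to invoke explicitly is the one-hot identity $\sum_{j\in J(g)}a_j^i=1$, since it is the single fact outside the constraint set $S$ that makes the bound $L(i,k)\le 1$, and thus the whole proposition, hold.
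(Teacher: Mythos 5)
Your proof is correct. It rests on the same three ingredients as the paper's --- the group-selection equation \eqref{eq:one-group-per-node}, the hierarchy constraint \eqref{eq:decision_hierarchy}, and the one-hot identity $\sum_{j\in J(g)}a_j^i=1$ --- but organizes them differently. The paper first uses integrality of the $v^k$ variables to single out the unique selected group $g'$ with $v_{g'}^k=1$, concludes $z_j^k=0$ for all $j\notin J(g')$, and then collapses the sum to the exact identity $L(i,k)=z_{j_i}^k$, where $j_i$ is the unique active feature of sample $i$ in group $g'$; both claims of the proposition are then immediate. You instead prove a two-sided bound: summand-wise integrality gives that $L(i,k)$ is a nonnegative integer, and the group-by-group estimate $\sum_{j\in J(g)}a_j^i z_j^k\le v_g^k$, summed over $g\in G$, gives $L(i,k)\le 1$. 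The paper's route yields the slightly stronger structural fact that $L(i,k)$ equals a single $z$ variable; your route has the advantage that the upper bound $L(i,k)\le 1$ uses only the linear constraints and nonnegativity, not integrality of $v$, so it survives in the LP relaxation of the $v$ variables --- a point that is in fact relevant to the relaxation arguments of Section~\ref{sec:relaxation}. Your explicit flagging of the one-hot identity as the one fact external to $S$ that makes the bound work is apt; the paper invokes it only implicitly when it refers to ``the unique feature for which $a_{j_i}^i=1$.''
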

\begin{proof}
	For any $(z,v)\in S$ and $k\in K$, exactly one of the $v_g^k$ variables, say $v_{g'}^k$, takes value $1$ and $v_g^k=0$ for all $g\not= g'$.
	Therefore, $z_{j}^k=0$ for all $j\not\in J(g).$
	Consequently, the first part of the claim  follows for all $i\in I$ as   $L(i,k) = \sum_{j\in J} a_j^iz_j^k=\sum_{j\in J(g')} a_j^iz_j^k= z_{j_i}^k\in\{0,1\}$ where $j_i\in J(g')$ is the index of the unique feature for which $a_{j_i}^i=1$.
	In addition, $L(i,k)=1$ if and only if $z_{j_i}^k=1$ which proves the second part of the claim.\qed
\end{proof}

\begin{figure} [h] \caption{A balanced depth 3 tree} \centering \begin{tikzpicture}[scale=0.7,->,>=stealth',level/.style={sibling distance = 7cm/#1,  level distance = 1.5cm}] 
	\node [decision] {1}
	child{ node [decision] {2} 
		child{ node [decision] {3} 
			child{ node [bucketB] {1} edge from parent node[above left]  {$L(3)$}} 
			child{ node [bucket] {2} edge from parent node[above right]  {$R(3)$}} 
			edge from parent node[above left]  {$L(2)$} 
		}
		child{ node [decision] {4}
			child{ node [bucketB] {3}}
			child{ node [bucket] {4}}
			edge from parent node[above right]  {$R(2)$} 
		}         
		edge from parent node[above left]  {$L(1)$} 
	}
	child{ node [decision] {5}
		child{ node [decision] {6} 
			child{ node [bucketB] {5}}
			child{ node [bucket] {6}}
		}
		child{ node [decision] {7}
			child{ node [bucketB] {7}}
			child{ node [bucket] {8}}
		}
		edge from parent node[above right]  {$R(1)$} 
	}
	; 
	\end{tikzpicture}
	\label{fig:tree}\end{figure}

Consequently, the expression $L(i,k)$ indicates if sample $i\in I$ branches left at node $k\in K$. Similarly, we define the expression
\begin{equation}
\label{r_definition}
R(i,k)=1-L(i,k)\quad \forall k\in K,~\forall i\in I,
\end{equation}
to indicate if sample $i$ branches right at node $k$.

To complete the model, we relate the expressions $L(i,k)$ and $R(i,k)$  to the $c_b^i$ variables.
Given that the topology of the tree is fixed, there is a unique path leading to each leaf node $b\in B$ 
from the root of the tree. This path visits a subset of the nodes $K(b)\subset K$ and for each $k\in K(b)$ 
either the left branch or the right branch is followed.
Let $K^L(b)\subseteq K(b)$ denote the decision nodes where the left branch is followed to reach
leaf node $b$ and let $K^R(b)=K(b)\setminus K^L(b)$ denote the decision nodes where the right branch is followed.
Sample $i$ is routed to $b$ only if it satisfies all the conditions at the nodes leading to that leaf node.
Consequently, we define the constraints
\begin{eqnarray}
\label{eq:left} c^i_b\le L(i,k)~~\text{ for all } \quad \forall b\in B,~\forall i\in I,~k\in K^L(b),\\
\label{eq:right} c^i_b\le R(i,k)~~\text{ for all } \quad \forall b\in B,~\forall i\in I,~k\in K^R(b),
\end{eqnarray}
for all $i\in I$ and $b\in B$.
Combining these  with the equations
\begin{equation}
\label{eq:pickone}
\sum_{b\in B}c^i_b=1 \quad \forall i\in I
\end{equation}
gives a complete formulation. 
Let
$$Q(z,v) =\big\{c\in\{0,1\}^{N\times|B|}:\text{ such that \eqref{eq:left}-\eqref{eq:pickone} hold}\big\}.$$
We next formally show that combining the constraints in $S$ and $Q(z,v)$ gives a correct formulation. 
\begin{prop}\label{prop2}
	Let $(z,v)\in S$, and let $c\in Q(z,v)$. Then, $c^i_b\in\{0,1\}$ for all $i\in I$ and $b\in B$.
	Furthermore, if $c^i_b=1$ for some $i\in I$ and $b\in B$, then sample $i$ is routed to leaf node $b$.
\end{prop}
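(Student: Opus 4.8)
The plan is to fix an arbitrary sample $i \in I$ and argue that, once $(z,v)$ is fixed, the constraints defining $Q(z,v)$ pin down each $c_b^i$ uniquely as the indicator of the single leaf that sample $i$ reaches. The first claim, that $c_b^i \in \{0,1\}$, is immediate from the definition of $Q(z,v)$, which already restricts $c$ to $\{0,1\}^{N\times|B|}$; the real work is identifying which coordinate equals $1$ and tying it to the routing. I would begin by invoking Proposition~\ref{prop1}: since $(z,v)\in S$, for every decision node $k\in K$ we have $L(i,k)\in\{0,1\}$, and therefore $R(i,k)=1-L(i,k)\in\{0,1\}$ as well. Hence at each node exactly one of $L(i,k),R(i,k)$ equals $1$, so sample $i$ has a well-defined left/right branching direction at every node it encounters. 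Because the topology is a fixed binary tree in which each leaf is reached from the root by a unique path, following these directions starting at the root leads sample $i$ to exactly one leaf, which I will denote $b^\star$; by definition this is the leaf to which sample $i$ is routed.

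The key step is to show that $c_b^i=0$ for every leaf $b\neq b^\star$. Fix such a $b$ and let $k^\star$ be the last node common to the root-to-leaf paths $K(b)$ and $K(b^\star)$, at which the two paths diverge. Sample $i$'s trajectory coincides with the path to $b^\star$, so at $k^\star$ it branches in the direction dictated by $b^\star$, which is opposite to the direction dictated by $b$. If sample $i$ branches left at $k^\star$ (so $L(i,k^\star)=1$ and $R(i,k^\star)=0$), then $k^\star\in K^R(b)$ and constraint~\eqref{eq:right} forces $c_b^i\le R(i,k^\star)=0$; symmetrically, if sample $i$ branches right at $k^\star$, then $k^\star\in K^L(b)$ and constraint~\eqref{eq:left} forces $c_b^i\le L(i,k^\star)=0$. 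In either case $c_b^i=0$. For $b=b^\star$, by contrast, every node on its path agrees with sample $i$'s branching, so each relevant right-hand side in \eqref{eq:left}-\eqref{eq:right} equals $1$, leaving $c_{b^\star}^i$ free to take the value $1$.

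Finally I would combine this with the packing equation~\eqref{eq:pickone}: since $\sum_{b\in B}c_b^i=1$ and every summand except possibly $c_{b^\star}^i$ has just been shown to vanish, it follows that $c_{b^\star}^i=1$. This establishes the second claim, as the unique nonzero coordinate $b^\star$ is precisely the leaf to which sample $i$ is routed, and repeating the argument over all $i\in I$ finishes the proof. I expect the divergence argument in the second paragraph to be the only delicate point: one must be certain that disagreement between the path to $b$ and sample $i$'s trajectory produces a genuinely binding constraint, and this hinges both on the uniqueness of root-to-leaf paths in the fixed topology and on Proposition~\ref{prop1} guaranteeing that the $L(i,k)$ and $R(i,k)$ values are true $0/1$ quantities rather than fractional ones.
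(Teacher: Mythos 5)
Your proposal is correct and follows essentially the same route as the paper's own proof: identify the unique leaf $b^\star$ that sample $i$ reaches, use the disagreement of branching directions (guaranteed to be genuine $0/1$ values by Proposition~\ref{prop1}) to force $c^i_b=0$ via \eqref{eq:left}--\eqref{eq:right} for every $b\neq b^\star$, and then invoke \eqref{eq:pickone} to conclude $c^i_{b^\star}=1$. Your explicit identification of the divergence node $k^\star$ just spells out the witness that the paper asserts more tersely.
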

\begin{proof} 
	Given $(z,v)\in S$ and $i\in I$, assume that the correct  leaf node sample $i$ should be routed to in the  decision tree defined by $(z,v)$ is the leaf node $b'$. 
	For all other leaf nodes $b\in B\setminus\{b'\}$,  sample $i$ either has $L(i,k)=0$ for some $ k\in K^L(b)$ or $R(i,k)=0$ for some $ k\in K^R(b)$. Consequently, $c^i_b=0$ for all $b\not=b'$. Equation \eqref{eq:pickone} then implies that  $c^i_{b'}=1$ and therefore $c^i_b\in\{0,1\}$ for all $b\in B$.
	Conversely, if  $c^i_{b'}=1$ for some $b'\in B$, then $L(i,k)=1$ for all $ k\in K^L(b)$ and $R(i,k)=1$ for all $ k\in K^R(b)$.\qed
\end{proof}

We therefore have the following integer programming (IP) formulation:
\begin{subequations}\label{firstmodel}\begin{align}
	\max & \hskip.5cm\sum_{i\in I_{+}}\sum_{b\in B_{+}} c_b^i + C\sum_{i\in I_{-}}\sum_{b\in B_{-}} c_b^i\label{firstmodel:obj}\\[.3cm]
	\text{s. t.} &\hskip1cm (z,v)\in S\\[.2cm] &\hskip1cmc\in Q(z,v)
	\end{align}\end{subequations}

\noindent where $C$ in the objective \eqref{firstmodel:obj} is a constant weight chosen in case of class imbalance. 
For instance, if a training set has
twice as many good examples as bad examples, it may be worth considering setting $C=2$, 
so that every correct classification
of a bad data point is equal to two correct classifications of good data points.

{
	Notice that formulation \eqref{firstmodel} allows solutions where all samples follow the same branch.
	For example,  it is possible to have a solution where a branching variable $ v^k_{g}=1$  for  some $k\in K$ and  $g\in G$, and at the same time  $z_j^k=0$ for all $j\in J(g)$. In this case $L(i,k)=0$ for all $i\in I$ and all samples follow the right branch.
	It is possible to exclude such solutions using the following constraint:
	
	\begin{equation}
	\label{eq:pickmore}\sum_{j\in J(g)} z_j^k \geq v^k_{g},
	\end{equation}
	where $J(g)\subset J$ denotes the set of features in group $g$.
	This constraint enforces that if a group is selected for branching at a node, then at least one of its  features has to be selected as well.
	Similarly, one can disallow the case when all samples follow the left branch:
	\begin{equation}
	\label{eq:pickless}\sum_{j\in J(g)} z_j^k \leq (|J(g)|-1)v^k_{g}.
	\end{equation}
	However, we do not use inequalities \eqref{eq:pickmore}-\eqref{eq:pickless} in our formulation and allow the decision tree nodes to branch in such a way that all items follow  the same branch.
}

\subsection{Choosing the tree topology}\label{sec:treetop}
\begin{figure} [tb] \caption{Possible tree topologies} \centering
	\begin{tikzpicture}[scale=0.6,->,>=stealth',level/.style={sibling distance = 7cm/#1,  level distance = 1.5cm}] 
	\node [decision] {1}
	child{ node [decision] {2} 
		child{ node [bucketB] {1}     }
		child{ node [bucket] {2}     }                        
	}
	child{ node [decision] {3}
		child{ node [bucketB] {3}             }
		child{ node [bucket] {4}            }
	}
	; 
	\end{tikzpicture}
	~~~~~~~\begin{tikzpicture}[scale=0.6,->,>=stealth',level/.style={sibling distance = 8cm/#1,  level distance = 1.5cm}] 
	
	\node [decision] {1}
	child{ node [decision] {2} 
		child{ node [decision] {3} 
			child{ node [bucketB] {1} }
			child{ node [bucket] {2}}
		}
		child{ node [decision] {4}
			child{ node [bucketB] {3}}
			child{ node [bucket] {4}}
		}                            
	}
	child{ node [decision] {5}
		child{ node [bucketB] {5}             }
		child{ node [bucket] {6}            }
	}
	
	; 
	\end{tikzpicture}

		\vskip1cm

\begin{tikzpicture}[scale=0.6,->,>=stealth',level/.style={sibling distance = 13cm/#1,  level distance = 1.5cm}] 
\node [decision] {1}
child{ node [decision] {2} 
	child{ node [decision] {3} 
		child{ node [decision] {4} child{ node [bucketB] {1}}		child{ node [bucket] {2}}}
		child{ node [decision] {5} child{ node [bucketB] {3}}		child{ node [bucket] {4}}} 
	}
	child{ node [decision] {6}
		child{ node [bucketB] {5}}
		child{ node [bucket] {6}}
	}         
}
child{ node [decision] {7}
	child{ node [bucketB] {7} }
	child{ node [bucket] {8}}
}

; 
\end{tikzpicture}
\label{fig:trees}\end{figure}
The IP model  \eqref{firstmodel} finds the optimal decision tree for a given tree topology which is an input to the model.
It is possible to build a more complicated IP model that can also build the tree topology (within some restricted class) but for computational efficiency, we decided against it.
Instead, for a given dataset, we use several fixed candidate topologies and build a different DTs for each one of them.
We then pick the most promising one using cross-validation.
The 4 tree topologies we use are the balanced depth 3 tree shown in  Figure~\ref{fig:tree} and the additional trees shown in Figure~\ref{fig:trees}. 

Note that the first two trees presented in Figure~\ref{fig:trees} can be obtained as a minor of the balanced depth 3 tree shown in  Figure~\ref{fig:tree} and therefore, the optimal value of the model using the balanced depth 3 tree will be better than that of using either one of these two smaller trees. 
Similarly, these two trees can also be  obtained as a subtree of the last tree  in Figure~\ref{fig:trees}.
However, due to possible overfitting, the larger trees might perform worse than the smaller ones on new data. As we will show via computational experiments, training smaller trees take fraction of the time compared to training larger trees, hence training a collections of trees of increasing topologies is comparable to training one large tree.

\subsection{ Computational tractability}\label{sec:tricks}
While \eqref{firstmodel} is a correct formulation, it can be improved to enhance computational performance.  
We next discuss some ideas that help reduce the size of the problem, break symmetry and strengthen the linear programming relaxation.
We first observe that the LP relaxation of \eqref{firstmodel}, presented explicitly below, is rather weak.

\begin{equation*}\begin{alignedat}{3}
\max 		 \hskip.7cm\sum_{i\in I_{+}}\sum_{b\in B_{+}} c_b^i &+ C\sum_{i\in I_{-}}\sum_{b\in B_{-}} c_b^i\label{LPfirstmodel:obj}\\[.3cm]
\text{s. t.} 	\hskip1.5cm \sum_{g\in G} v_g^k &= 1 \quad \forall k\in K,\\
z_j^k &\leq v^k_{g(j)} \quad \forall j\in J, ~\forall k\in K\\[.1cm] 
\sum_{b\in B: K^L(b) \ni k }c^i_b&\le L(i,k)\quad \forall i\in I, ~k\in K\\
\sum_{b\in B: K^R(b) \ni k }c^i_b&\le R(i,k)\quad \forall i\in I, ~k\in K.
\end{alignedat}
\end{equation*}
As $\sum_{b\in B} c_b^i \le 1$, for all $i\in I$, the optimal value of the LP relaxation is at most $| I_{+}|+C|  I_{-}|$. 
Assuming that  the decision tree has at least two levels, 
we will next construct a solution to the LP that attains this bound. 
Moreover, this solution would also satisfy $v^k_{g}\in\{0,1\}1$ for all $k\in K$ and $g\in G$. 

As the decision tree has at least two levels, both the left and right branches of the root node contain a leaf node in $ B_{+}$ as well as a leaf node in $ B_{-}$.
Let  $b_-^L,b_-^R\in B_-$ and  $b_+^L,b_+^R\in B_+$  where  $b_-^L$ and $b_+^L$ belong to the left branch and $b_-^R$ and $b_+^R$ belong to the right branch.
For an arbitrary $\bar g\in G$, 
we construct the solution $(z,v,c)$ as follows:
First we set $v^k_{\bar g}=1$ for all $k\in K$ and $ z_j^k=1/2$ for all $k\in K$ and $j\in J(\bar g)$. 
We then set $c_b^i=1/2$ for $b\in\{b_+^L,b_+^R\}$ for all $i\in I_{+}$ and  set $c_b^i=1/2$ for $b\in\{b_-^L,b_-^R\}$ for all $i\in I_{-}$.
We set all the remaining variables to zero.
Notice that $\sum_{b\in B_{-}} c_b^i = 1$ for $i\in I_{-}$ and $\sum_{b\in B_{+}} c_b^i = 1$ for $i\in I_{+}$ and therefore the value of this solution is indeed $| I_{+}|+C|  I_{-}|$.  
To see that the this solution is feasible, first note that  $ \sum_{g\in G} v^k_g = 1 $ for all $k\in K$ and $ z_j^k \leq v^k_{g(j)} $ for all $j\in J$ and $k\in K$. 
Also notice that $L(i,k)=R(i,k)=1/2$ for all $ i\in I$ and $k\in K$, which implies that \eqref{eq:strongleft} and \eqref{eq:strongright} are also satisfied for all $ i\in I$ and $k\in K$.

\subsubsection{Strengthening the model}\label{sec:strengthening}
Consider  inequalities \eqref{eq:left}
\begin{equation*}
c^i_b\le L(i,k)~~
\end{equation*}
for  $i\in I$, $b\in B$ and $k\in K^L(b)$  where $K^L(b)$ denotes the decision nodes where the left branch is followed to reach the leaf node $b$.
Also remember that $\sum_{b\in B}c^i_b=1$  for $i\in I$ due to equation \eqref{eq:pickone}.

Now consider a fixed $i\in I$ and $k\in K$.
If $ L(i,k)=0$, then  $c^i_b=0$ for all $b$ such that   $k\in K^L(b)$. 
On the other hand,  if $ L(i,k)=1$ then at most one $c^i_b=1$ for  $b$ such that   $k\in K^L(b)$. 
Therefore,
\begin{equation} \label{eq:strongleft}
\sum_{b\in B: K^L(b) \ni k }c^i_b\le L(i,k)
\end{equation}
is a valid inequality for all ${i\in I}$ and $k\in K$.
While this inequality is satisfied by all integral solutions to the set $Q(z,v)$, it is violated by some of the solutions to its continuous relaxation.
We replace the inequalities \eqref{eq:left} in the formulation with  \eqref{eq:strongleft} to obtain a tighter formulation. 
We also replace inequalities \eqref{eq:right} in the formulation with the following valid inequality:
\begin{equation} \label{eq:strongright}
\sum_{b\in B: K^R(b) \ni k }c^i_b\le R(i,k)
\end{equation}
for all ${i\in I}$ and $k\in K$.

Note that for any $i\in I$, adding inequalities \eqref{eq:strongleft} and \eqref{eq:strongright} for the root node  implies that $\sum_{b\in B} c_b^i \le 1$ and therefore we can now drop inequality \eqref{eq:pickone} from the formulation.

\subsubsection{Breaking symmetry: Anchor features}\label{sec:anchoring}
If the variables of an integer program can  be  permuted without changing the structure of the problem, the integer program is called {\em symmetric}.
This poses a problem for MILP solvers (such as IBM ILOG CPLEX) since the search space increases exponentially, see Margot (2009).
The formulation \eqref{firstmodel} falls into this category  as there may be multiple alternate solutions that represent the same decision tree.
In particular, as we have discussed earlier in the paper, we consider a decision node that is not adjacent to leaf nodes and assume that the subtrees associated with  the left and right branches of this node are symmetric (i.e. they have the same topology). 
In this case, if the branching condition is reversed at this decision node (in the sense that the values of the $v$ variables associated with the chosen group are flipped), and, at the same time, the subtrees associated with the left and right branches of this node are switched, one obtains an alternate solution to the formulation corresponding to the same decision tree.
To avoid this, we designate one particular feature $j(g)\in J(g)$ of each group $g\in G$  to be the  \emph{anchor feature} of that group and enforce that if a group is selected for branching at such a node, samples with the anchor feature follow the left branch. More precisely, we add the following equations to the formulation:
\begin{equation}
\label{anchorfeatures}
z_{j(g)}^k = v^k_{g}
\end{equation}
for all  $g\in G$, and all $k\in K$ that is not adjacent to a leaf node and has symmetric subtrees hanging on the right and left branches.
While equations \eqref{anchorfeatures} lead to better computational performance, they do not exclude any decision trees from the feasible set of solutions.

\subsubsection{Deleting unnecessary variables}\label{sec:deletion}
Notice that the objective function \eqref{firstmodel:obj} uses variables $c_b^i$ only if it corresponds to a correct classification of the sample (i.e.,
${i\in I_{+}}$ and ${b\in B_{+}}$, or  ${i\in I_{-}}$ and $ {b\in B_{-}}$). 
Consequently,  the remaining $c_b^i$ variables can be projected out of the formulation 
without changing the value of the optimal solution. We therefore only define $c_b^i$ variables for
\begin{equation}\big\{(i,b): i\in I_{+}, b\in B_{+},\text{ or, } i\in I_{-}, b\in B_{-} \big\}\end{equation}
and write constraints \eqref{eq:left} and \eqref{eq:right}  for these  variables only. 
This reduces the number of $c$ variables and the associated constraints in the formulation by a factor of one half.
In addition, we delete equation \eqref{eq:pickone}.

Also note that the objective  function \eqref{firstmodel:obj} is maximizing a (weighted) sum of $c_b^i$ variables and the only constraints that restrict the values of these variables are inequalities  \eqref{eq:left} and  \eqref{eq:right} which have a right hand side of 0 or 1. 
Consequently, replacing the integrality constraints $c_b^i\in\{0,1\}$ with simple bound constraints  $1\ge c_b^i\ge0$, still yields optimal solutions that satisfy $c_b^i\in\{0,1\}$.
Hence, we do not require $c_b^i$ to be integral in the formulation and therefore significantly reduce the number of integer variables.

\subsubsection{Relaxing some binary variables}\label{sec:relaxation}
The computational difficulty of a MILP typically increases with the number of integer variables in the formulation 
and therefore it is desirable to impose integrality on as few variables as possible.
We next show that all of the $v$ variables and most of the $z$ variables take value $\{0,1\}$ in an optimal solution even when they are not explicitly constrained to be integral.

\begin{prop}\label{prop3}
	Every extreme point solution to \eqref{firstmodel} is integral even if  (i) variables $v^k_g$ are not declared integral for all $g\in G$ and decision nodes $k\in K$, and, (ii) variables $z^k_j$ are not declared integral for $j\in J$ and decision nodes $k\in K$ that are adjacent to a leaf node.
\end{prop}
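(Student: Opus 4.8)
The plan is to show that any extreme point $(z^*,v^*,c^*)$ of \eqref{firstmodel}, with the relaxed integrality, is fully integral by the standard device: if some continuous component is fractional, I exhibit a nonzero direction $d$ (with $d=0$ on the components that are still declared integer, i.e. on $z^k_j$ for internal $k$) such that $(z^*,v^*,c^*)\pm\epsilon d$ are both feasible for small $\epsilon>0$, contradicting extremality. I would organize the argument around the three blocks of variables, establishing integrality of the $v$'s, then the leaf-adjacent $z$'s, then the $c$'s.

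First I would dispose of the easy blocks. The vector $v^k$ appears only in \eqref{eq:one-group-per-node} and as the upper bounds \eqref{eq:decision_hierarchy}, never in the objective or in any $c$-constraint; hence $v^k$ can be perturbed while holding $z$ and $c$ fixed, so at an extreme point $v^k$ must be a vertex of the simplex $\{v^k\ge0:\sum_g v^k_g=1,\ v^k_g\ge\max_{j\in J(g)}z^{k*}_j\}$. Such a vertex has $v^k_g=\max_{j\in J(g)}z^{k*}_j$ for all but one group, so once every $z^{k*}$ is shown integral, every $v^k_g$ is integral. At an internal node this already closes the loop: integrality of $z^{k*}$ together with \eqref{eq:one-group-per-node}--\eqref{eq:decision_hierarchy} forces, exactly as in Proposition~\ref{prop1}, that at most one group carries nonzero $z$ and hence $L(i,k),R(i,k)\in\{0,1\}$ for every $i$. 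Finally, the $c$ block is settled once all $L(i,k),R(i,k)$ are $0/1$: for a fixed sample the constraints \eqref{eq:strongleft}--\eqref{eq:strongright} leave exactly one reachable leaf (all others are pinned to $0$ by a path bound with right-hand side $0$), so the unique free $c$-variable is integral at a vertex, reproducing the reasoning of Proposition~\ref{prop2}. Since the $c$-blocks are independent across samples, $c^*$ is integral as soon as $z^*,v^*$ are.

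The crux is therefore integrality of $z^{k_0}_j$ at a leaf-adjacent node $k_0$, and this is where I expect the real work. The structural fact I would exploit is that, because every decision node has either two leaf children or two decision-node children, all ancestors of $k_0$ are internal; their $L,R$ values are thus already $0/1$, and each sample either reaches $k_0$ or is pinned to $c^i_b=0$ on the whole subtree below $k_0$. For a reaching sample the only surviving bound (after the deletion that keeps only label-matching leaves) is $c^i_{b_L}\le L(i,k_0)$ or $c^i_{b_R}\le 1-L(i,k_0)$, one per sample according to its label, and these leaf variables occur in no deeper constraint. Assuming some $z^{k_0*}_{j}$ is fractional, I would build $d$ as follows: pick a nonzero perturbation $\delta$ of $z^{k_0}$ supported on fractional, non-bound-tight features; co-move every tight $c^i_{b_L}$ by $+\sum_j a^i_j\delta_j$ and every tight $c^i_{b_R}$ by $-\sum_j a^i_j\delta_j$; and co-move the $v^{k_0}_g$ that are tight against a moving $z$, balancing them through the slack group so that $\sum_g v^{k_0}_g=1$ is preserved. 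For small $\epsilon$ all box bounds and linking constraints then survive in both directions.

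The hard part is proving that such a nonzero $\delta$ always exists when $z^{k_0*}$ is fractional, i.e. that the tight constraints cannot pin a $z$-component strictly between $0$ and $1$. Concretely, one must check that the only equalities available to pin $z^{k_0}$ are the box bounds (integral), the linking equalities $z^{k_0}_j=v^{k_0}_{g(j)}$, and the tight path bounds $c=L$ or $c=1-L$, and that a fractional $z$-component therefore leaves the tight-constraint submatrix, restricted to the continuous block, column-rank deficient, yielding a null direction that can always be co-balanced using the slack in the $v$-simplex. The bookkeeping is delicate because a single $z^{k_0}_j$ influences $L(i,k_0)$ for many samples at once and because $v^{k_0}$ may a priori spread its mass over several groups; the cleanest route is to first use the vertex characterization of $v^{k_0}$ to reduce to a single active group per node, after which $L(i,k_0)$ depends on $z^{k_0}$ within one group only and the required null direction is immediate. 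I would regard establishing this rank/null-space claim, equivalently the total unimodularity of the continuous-block constraint matrix after the internal $z$ are fixed, as the technical heart of the proof.
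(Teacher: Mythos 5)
Your architecture is sound and matches the paper's in spirit: everything is done by exhibiting two feasible points whose midpoint is the alleged extreme point, the declared-integer internal $z$'s anchor the argument, the internal $v$'s and the $c$'s fall out easily, and the entire difficulty is concentrated at the leaf-adjacent nodes. But you stop exactly at that difficulty: you declare the existence of a feasible two-sided perturbation when some leaf-adjacent $z^{k_0}_j$ is fractional to be ``the technical heart,'' and leave it as an unverified rank/total-unimodularity claim. That is the step Proposition~\ref{prop3} actually needs, so as written the proposal is a correct reduction plus an unproven core. Moreover, the one concrete shortcut you offer for that core does not work: the vertex characterization of $v^{k_0}$ over the slice polytope $\{v\ge \ell,\ \sum_{g} v_g=1\}$ with $\ell_g=\max_{j\in J(g)} z^{k_0}_j$ does \emph{not} reduce matters to ``a single active group per node'' --- if $z^{k_0}$ carries positive fractional mass in two different groups, every vertex of that slice has two groups with positive fractional $v^{k_0}_g$, and $L(i,k_0)$ then sums contributions across groups. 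That multi-group configuration is precisely the hard case.

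The paper closes it without any matrix-rank computation, by a short chain of elementary perturbations at the leaf-adjacent node $a$ with children $b_+,b_-$: (i) any $\bar c^i_{b_+}>0$ must equal $L(i,a)$ (and $\bar c^i_{b_-}=1-L(i,a)$), else perturb $c$ alone; (ii) any $\bar z^a_h$ must equal either $0$ or $\bar v^a_{g(h)}$, else perturb $z^a_h$ alone while co-moving the $c$'s tied to it by (i); (iii) hence any residual fractionality forces $0<\bar v^a_d<1$ for some group $d$, and by \eqref{eq:one-group-per-node} a second group $e$ with $0<\bar v^a_e<1$; (iv) trade $\pm\epsilon$ of mass between $v^a_d$ and $v^a_e$, dragging along the $z$'s tied to them by (ii) and the $c$'s tied to those by (i). Step (ii) is what substitutes for your missing null-space lemma: it collapses the a priori arbitrary fractional $z$-pattern onto the $v$-simplex, where the slack guaranteed by \eqref{eq:one-group-per-node} makes the two-sided move immediate even with several active groups. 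If you carry out (ii) and (iv) explicitly, your proof is complete and essentially identical to the paper's.
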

\begin{proof}
	Assume  the claim does not hold and let  $\bar p = (\bar v,\bar z,\bar c)$ be an extreme point solution that is fractional.
	Let $K^L\subset K$ denote the decision nodes that are adjacent to leaf nodes and consider  node $a\not\in K^L$.
	First note that if  $\bar v^{a}_{b}$  is fractional, that is, if $1>\bar v^{a}_{b}>0$ for some  feature group $b\in G$,  then  $1>\bar v^a_g$ for all groups $g\in G$ as $\sum_{g\in G}\bar v^a_g=1$.
	Consequently, for this decision node we have  all $\bar z^a_j=0$ as  $\bar z^a_j\in\{0,1\}$ for $j\in J$. 
	This also implies that  $L(i,a)=0$ for all $i\in I$.
	In this case, for any $ g\in G$, the point $\bar p$ can be perturbed by setting the $ v^{a}_{ g}$ variable to 1 and setting the remaining $ v^{a}_{*}$ variables to 0 to obtain a point that satisfies the remaining constraints. 
	A convex combination of these perturbed points (with weights equal to $\bar v^{a}_{g}$ ) gives the point $\bar p$, a contradiction.
	Therefore all  $\bar v^k_g$ are integral for $g\in G$ and  $k\in K\setminus K^L$.
	
	Therefore, if $\bar p$ is fractional, then at least one of the following must hold:  either (i) $1>\bar v^{k}_{g}>0$ for some  $k\in K^L$ and $g\in G$, or, (ii) $1>\bar z^k_j>0$ for some $k\in K^L$ and $j\in J$, or, (iii) $1>c^i_b>0$ for some  $b\in B$ and $i\in I$. 
	As all these variables are associated with some decision node $k\in K^L$, we conclude that there exists a decision node $a\in K^L$ for which either $1>\bar v^{a}_{g}>0$ for some $g\in G$, or,  $1>\bar z^a_j>0$ for some $j\in J$, or,  $1>c^i_b>0$ for some $i\in I$ and $b\in \{b_+,b_-\}$ where $b_+\in B_+$ and $b_-\in B_-$ are the two leaf nodes attached to decision node $a$ on the left branch and on the right branch, respectively.
	
	Let $ I_a^+$ denote the set of samples in $I^+$ such that $\bar c^i_{b_+}>0$ and similarly,  let $ I_a^-$ denote the set of samples in $I^-$ such that $\bar c^i_{b_-}>0$. 
	If $\bar c^i_{b_+}\not = L(i,a)$, for some $i\in I_a^+$, then point $\bar p$ can be perturbed by increasing and decreasing $\bar c^i_{b_+}$ to obtain two new points that contain $\bar p$ in their convex hull, a contradiction. Note that $L(i,k) \in\{0,1\}$ for all $i\in I$ and $k\in K\setminus K^L$ and therefore these two points indeed satisfy all the constraints.
	Consequently, we conclude that  $\bar c^i_{b_+} = L(i,a)$ for all $i\in I_a^+$.  
	Similarly,  $\bar c^i_{b_-} = 1-L(i,a)$ for all $i\in I_a^-$. 
	Notice that this observation also implies that, if  $\bar c^i_{b_+}$ is fractional for some $i\in I_a^+$ or  $\bar c^i_{b_-}$ is fractional for some $i\in I_a^-$, then $L(i,a)$ is also fractional, which in turn implies that for some feature $h\in J$ we must have $\bar z^a_h>0$ fractional as well.

	Now assume there exists a feature  $h\in J$ such that $v^a_{g(h)}>\bar z^a_h>0$.  
	In this case increasing and decreasing $\bar z^a_h$ by a small amount and simultaneously updating the values of $\bar c^i_{b_+}$ for $i\in I_a^+$ and $\bar c^i_{b_-}$ for $i\in I_a^-$ to satisfy  $\bar c^i_{b_+} =  L(i,a)$ and  $\bar c^i_{b_-} = 1-L(i,a)$ after the update, leads to two new points that contain $\bar p$ in their convex hull. 
	Therefore, we conclude that $\bar z^a_h$ is either zero, or $\bar z^a_h = \bar v^a_{g(h)}$.
	
	So far, we have established that if $\bar c^i_b$ is fractional for some $i\in I_a^-\cup I_a^+$ and $b\in \{b_+,b_-\}$, then there is a fractional $\bar z^a_j$ variable for some  feature  $j\in J$. 
	In addition, we observed that if there is a fractional $\bar z^a_j$ for some $j\in J$ then there is a fractional  $\bar v^a_g$ for some $g\in G$. 
	Therefore, if $\bar p$ is not integral, there exists a feature group $d\in G$ such that $1>\bar v^a_d>0$. 
	As $\sum_{g\in G}\bar v^a_g=1$, this implies that there also exists a different group $e\in G\setminus\{d\}$ such that $1>\bar v^a_e>0$. 
	
	We can now construct two new points that contain $\bar p$ in their convex hull as follows:
	For the first point we increase $\bar v^a_d$ and decrease $\bar v^a_e$ by a small amount and for the second point we do the opposite perturbation. In addition, for both points we first update the values of $\bar z^a_j$ for all $j\in J$ with $g(j)\in\{b,d\}$ and $\bar z^a_j>0$ so that  $\bar z^a_h = \bar v^a_{g(h)}$ still holds.
	Finally, we perturb the associated $\bar c^i_b$ variables for $i\in I_a^-\cup I_a^+$ and $b\in \{b_+,b_-\}$ so that  $\bar c^i_{b_+} = L(i,a)$, for $i\in I_a^+$, and   $\bar c^i_{b_-} = 1-L(i,a)$ for all $i\in I_a^-$. 
	Both points are feasible and therefore we can conclude that $\bar p$ is not an extreme points, which is a contradiction. Hence $\bar p$ cannot be fractional.
	\qed
\end{proof}
We have therefore established that the only variables that need to be declared integral in the formulation \eqref{firstmodel} are the feature selection variables  $z^k_j$  for all features $j\in J$ and decision nodes $k\in K$ that are not adjacent to a leaf node.
Thus we have a formulation for training optimal decision trees, where the number of integer variables is {\em independent} of the number of samples.

\subsection{Handling numerical features}\label{sec:numericalfeatures}
To handle numerical features, we simply turn them into categorical features by binning them into intervals using deciles as thresholds.
Consequently, each numerical feature becomes a categorical feature with (up to) 10 possible values, depending on the decile it belongs to.
Therefore, one can use the model describe above without any further changes. 
However, this might lead to decision trees that branch on, for example, whether or not a numerical feature belongs to the 2nd or 7th quantiles, which of course is not a very interpretable condition. 
It is therefore desirable to branch on these features in a way that captures their ordinal nature.
To this end, we add additional constraints for these features to ensure that the branching decisions correspond to ``less than or equal to" or  ``greater  than or equal to" conditions.

More precisely, for each node $k\in K$ and for each group $g\in G$ that corresponds to a numerical feature, we create an additional variable $w_g^k$ to denote if the branching condition is of ``greater  than or equal to"  or  ``less than or equal to" form. We then require the associated $z^k_j$ variables for $j\in J(g)$ to take either increasing  (when  $w_g^k=1$)  or decreasing values (when  $w_g^k=0$). The additional constraints are,
\begin{equation*}\begin{alignedat}{4}
z_j^k &\geq z_{j+1}^k - w^k_{g} \quad&~ \forall j, j+1\in J(g) \\[.1cm] 
z_{j}^k &\geq z_{j-1}^k -  (1-w^k_{g}) \quad&~ \forall j, j-1\in J(g)\\[.1cm] 
w^k_{g} &\in\{0,1\}.&
\end{alignedat}
\end{equation*}

We note that it is possible to enforce ``less than or equal to" or  ``greater  than or equal to" form without using the additional variables $w$, by binarizing numerical features differently. However in this case the LP
formulation becomes  more dense and overall solution times are significantly slower. 

We  also note that an alternate way to break symmetry in this case is to set all $w^k_{g}$ variables  to 1 for $g\in G$  (without loss of generality) whenever  $k\in K$  is not adjacent to a leaf node and has symmetric subtrees hanging on the right and left branches. For balanced trees this property holds for all non-leaf nodes.
Clearly, if this symmetry breaking rule is used, one should not use anchor features described in Section \ref{sec:anchoring}.

\subsection{Controlling overfitting due to  combinatorial branching}\label{sec:limitcombinatorial}
As mentioned earlier, combinatorial branching may lead to overfitting when $|J(g)|$ is large for a categorical feature $g\in G$ as there are $2^{|J(g)|}$ possible ways to branch using this feature. 	To avoid overfitting,  we require the size of the subset used for branching to be either at most $max.card$ or at least  $(|J(g)|-max.card)$ for some input parameter  $max.card$.
To this end, for each node $k\in K$ and for each group $g\in G$ that corresponds to a categorical feature with $|J(g)|>max.card$, we create an additional variable $w_g^k$ and include the following constraints in the formulation,
\begin{equation*}\begin{alignedat}{4}
\sum_{j\in J(g)} z_j^k &\leq max.card + (|J(g)|-max.card ) (1-w_g^k )\\[.1cm] 
\sum_{j\in J(g)} z_j^k &\geq (|J(g)|-max.card ) - (|J(g)|-max.card ) w_g^k \\[.1cm] 
w^k_{g} &\in\{0,1\}.&
\end{alignedat}
\end{equation*}
We  again note that an alternate way to break symmetry in this case is to set all $w^k_{g}$ variables  to 1 for $g\in G$  whenever  $k\in K$  is not adjacent to a leaf node and has symmetric subtrees hanging on the right and left branches.

\subsection{Maximizing sensitivity/specificity}\label{sec:sensspec}
In many practical applications, especially those involving imbalanced datasets, the user's goal  is to  maximize
sensitivity (the true positive rate, or TPR),  while guaranteeing a certain level of specificity (the true negative rate, or TNR), or vice versa, instead of optimizing the total accuracy.  
While such problems cannot be addressed with heuristics such as CART (except by a trial-and-error approach to reweighting samples), our model \eqref{firstmodel}
readily lends itself to such a modified task. 
For example, if we intend to train a classifier with a guaranteed specificity (on the training set) of $0.95$, then we simply add  the following constraint to  \eqref{firstmodel}  
\begin{equation}\label{minspec}
\sum_{i\in I_{-}}\sum_{b\in B_{-}} c_b^i \geq  \lceil (1-0.95)|I_{-}| \rceil
\end{equation}
and change the objective function \eqref{firstmodel:obj} to 
\begin{equation}\label{maxrecall}
\sum_{i\in I_{+}}\displaystyle\sum_{b\in B_{+}} c_b^i.
\end{equation}

Likewise, we can produce a model that maximizes specificity while guaranteeing a certain level of sensitivity by switching the expressions in the constraint \eqref{minspec} and objective \eqref{maxrecall}.

\section{Computational Results}\label{sec:comp}

We now turn to computational experiments for which we used a collection of 10 binary (two-class) classification datasets. We obtained two of these datasets ({\em a1a} and {\em breast-cancer-wisconsin})  from LIBSVM \cite{LIBSVM}, one from FICO Explainable Machine Learning Challenge \cite{FICO} and the remaining  7  from the UCI Machine Learning repository \cite{UCI}.
These datasets were selected because they  fit into our framework  as majority their variables are either binary or categorical. 
Each dataset was preprocessed to have the binary form assumed by the formulation, 
with identified groups of binary variables. A summary description of the problems is given in Table \ref{data-describe}.

\begin{table}[h]
\centering
\caption{Summary description of the datasets}
\label{data-describe}
\begin{tabular}{rrrrrr}
\textbf{dataset}             & \textbf{\# Samples} & \textbf{\% Positive} & \textbf{\# Features} & \textbf{\# Groups} \\
a1a                          			& 1605  & 25\%   & 122   & 14  \\
breast-cancer-wisconsin 	(bc) 		& 695   & 65\%   & 90    & 9   \\
chess-endgame  				(krkp)		& 3196  & 52\%   & 73    & 36  \\
mushrooms   				(mush)      & 8124  & 52\%   & 111   & 20  \\
tic-tac-toe-endgame   		(ttt) 		& 958   & 65\%   & 27    & 9   \\
monks-problems-1 			(monks-1)   & 432   & 50\%   & 17    & 6   \\
congressional-voting-records (votes)	& 435   & 61\%   & 48    & 16  \\
spect-heart   				(heart)		& 267   & 79\%   & 44    & 22  \\
student-alcohol-consumption (student) 	& 395   & 67\%   & 109   & 31  \\
FICO Explainable ML Challenge (heloc)             & 9871  & 48\%   & 253   & 23
\end{tabular}
\end{table}
Each dataset/tree topology pair results in a MILP instance, which we implemented in 
Python 2.7 and then solved with IBM ILOG CPLEX 12.6.1 on a computational cluster, giving each instance access to 8 cores of an AMD Opteron 2.0 GHz processor.
Throughout this section, we will refer to our method as ODT (Optimal Decision Trees).

\subsection{Tuning the IP model}\label{sec:comp:IPtune}

We begin with some computational tests to illustrate the benefit  of various improvements to  the IP model that were discussed in $\S$\ref{sec:tricks}.
We only show results for five of the datasets: {\em a1a}, {\em bc}, {\em krkp}, {\em mush} and {\em ttt}, since for the other datasets, aside from {\em heloc},
the  IP is solved quickly and the effect of improvements is less notable, while for {\em heloc} the time limit was reached in all cases. 

We note that the deletion of unnecessary variables discussed in $\S$\ref{sec:deletion} seems to be
performed automatically by CPLEX in preprocessing, and so we do not report results relevant to this modeling choice.
However, we experiment with anchoring ($\S$\ref{sec:anchoring}), 
relaxing appropriate $z$ variables and $c$ variables ($\S$\ref{sec:relaxation}),
and strengthening the model ($\S$\ref{sec:strengthening}). In particular, we compare the model where none of the above techniques are 
applied (Nothing),  only relaxation and strengthening are applied (No Anchor), only anchoring and strengthening are applied (No Relax), only 
anchoring and  relaxation are applied (No Strength) and finally when all of the techniques are  applied (All).

In Table \ref{IPStrengtheningD3S200} we show the results for symmetric DTs of depths 3, while using  reduced datasets of  
200 randomly subsampled data instances. In each column we list the total time in seconds it took Cplex to to close the optimality gap to below the 
default tolerance and the total number of LPs solved in the process. In the case when Cplex exceeded 3 hours, 
the solve is terminated and a "*" is reported instead of the time.

\begin{table}[h]
	\centering
	\footnotesize
	\caption{IP Strengthening for depth 3 with 200 samples  - each table entry represents $\#$ seconds/number of LPs solved}
	\label{IPStrengtheningD3S200}
	\begin{tabular}{lcccccc}
		\textbf{Dataset}             & \textbf{Nothing} & \textbf{	No Anchor} & \textbf{	No Relax} & \textbf{No Strength} & \textbf{All} 	\\
a1a & */2443792 & */2422165 & */5660954 & 2670/598733 & 3098/1157891\\ 
bc  & 2193/50075 & 405/118193 &139/52375 &188/18121&44/18660\\
krkp  & 5377 /2766623 & 392/95623 & 3726/2702709 & 1434/291221 &320/131274\\
mush	&	31/26	&	22/20	&	12/65	&	22/26		&	23/49~	\\
ttt	&	1837/1914999	&	346/169235	&	71/63109	&	175/28588		&	31/10737	\\
\end{tabular}
\end{table}

As we see from Table  \ref{IPStrengtheningD3S200}, the data set with 200 data points make the IP difficult to solve for some data sets, such as {\em a1a}
but is  easy to some others, such as {\em bc} and {\em mush}. Hence in Table \ref{IPStrengtheningD3Svar} we show results for various sizes of data, selected so that the corresponding IP is not trivial but is still solvable within three hours. 
\begin{table}[h]
	\centering	\footnotesize
\caption{IP Strengthening for depth 3 with varying samples  - each table entry represents $\#$ seconds/number of LPs solved}
\label{IPStrengtheningD3Svar}
\begin{tabular}{lccccccc}
		\textbf{Dataset} & \textbf{Samples} & \textbf{Nothing} & \textbf{no Anchor} & \textbf{	No Relax.} & \textbf{No Strength}  & \textbf{All} 	\\
a1a		&	100	&	7262/2555737& 2541 / 1584533 & 503/426853 & 1352 /840813 & 170/104504\\
bc		&	300	&	7766/1013135& 5445/981711& 223/64411 &386/32262 & 349/53194\\	
krkp & 400 & */559764 & 6984/847235 &7533/1289615 & 2936 /97214 &3693/719622\\
mush	&	500	&	151/37~~ 	&	~41/0		&	55/1109		&	182/215		&	38/7		\\	
ttt		&	300	&	1394/404553&	946/226864	&424/88755	&	253/29869	&	35/12154	\\
	\end{tabular}
\end{table}

We can conclude from Tables  \ref{IPStrengtheningD3S200} and \ref{IPStrengtheningD3Svar} that our proposed strategies provide significant improvement 
in terms of computational time. In some cases, turning off an option  may outperform using all options; for example, turning off variable strengthening
 improves computational time
for {\em a1a}  compared to the {\em All}  option in  Table  \ref{IPStrengtheningD3S200}. However, it gives worse results for {\em bc}, {\em krkp} and {\em ttt}, hence we conclude that using all proposed
improvements is the best overall strategy.

Next we show the dependence of computational time on the tree topology and the size of the data set. In Table \ref{IPDepthvssizekrkp} we report these results for the {\em krkp}, {\em a1a}, and {\em bc} data set. Here, by depth 2.5 we refer to the topology shown in the upper right corner of Figure~\ref{fig:trees}, and by imbalanced, we refer to the topology shown in the bottom of Figure~\ref{fig:trees}.
In these experiments we terminated the Cplex run after 2 hours and when this happens we report "*" in the tables instead of the time.

\begin{table}[h]
	\centering
	\footnotesize
	\caption{Solution times  (in seconds) for {\bf krkp, bc and a1a}. }
	\label{IPDepthvssizekrkp}
	\begin{tabular}{llrrrrrr}
		\textbf{Topology}  &\bf  Data set &\bf100 &\bf 200 & \bf300 &\bf 400 &\bf 500 &\bf600 	\\
		depth2 		& krkp & 3& 7& 10 & 11 & 16 & 24\\
		depth 2.5 	& krkp & 12& 30& 134 & 106 &119  &803 \\
		depth 3 	& krkp & 268& 663& 2810 & 2267 &4039  & 5837\\
		imbalanced 	& krkp & 359& 3543& * & * & * & *\\[.3cm]
depth2 		& bc & 1& 3& 5 & 6 & 11 &10 \\
depth2.5 	& bc & 4& 30& 56 &  98& 173 & 199\\
depth3 		& bc &7 &44 & 349 & 2644& 1124 & *\\
imbalanced 	& bc & 3& 349& 1842 & 4958& * & *\\[.3cm]
		depth2 &  a1a & 3 & 9 & 12 & 18& 23 & 28 \\
		depth2.5&  a1a & 38 & 642	 & 1415 & 225& 302 & 2331\\
		depth3 &  a1a & 170 & 3098 & *& * & * & *\\
		imbalanced &  a1a & 1004 & * & * & * & * & *
	\end{tabular}
\end{table}

As one would expect, Table \ref{IPDepthvssizekrkp} shows that solving the IP to optimality becomes increasingly more difficult when the sample size increases and when  the tree topology becomes more complicated. 
However, the increase in solution time as sample size increases differs significantly among different datasets for the same tree topology depending on the number of features and groups of the dataset as well as how well the data can be classified using a decision tree.  
Note that even though the imbalanced trees and depth 3 trees have the same number of nodes, solving the IP for imbalanced trees is more challenging.
We believe this might be due to the fact that symmetry breaking using anchor features has to be disabled at the root node of  imbalanced trees, as the tree is not symmetric.

Restricting the number of features in the data can significantly reduce computational time. To demonstrate this, we run the following experiments: we first repeatedly apply the CART algorithm to  each data set, using 90\% of the data and default setting and thus not applying any particular restriction of the size of the tree. We then select groups that have been used for branching decision at least once in the CART tree. We then remove all other feature groups from the IP formulation (by setting the corresponding $v$ variables to $0$) and apply our ODT model to the reduced problem. The results in terms of time are listed in Table \ref{IPDepthvssizekrkpWS}. We can see that in many cases significant improvement in terms of time is achieved over results reported in Table \ref{IPDepthvssizekrkp}. We will discuss the effect of the feature selection on the prediction accuracy later in Section \ref{sec:comp:topologyA}. 

\begin{table}[h]
	\centering
	\footnotesize
	\caption{Solution times  (in seconds) for {\bf krkp, bc and a1a} using feature selection }
	\label{IPDepthvssizekrkpWS}
	\begin{tabular}{llrrrrrr}
		\textbf{Topology}  &\bf  Data set &\bf100 &\bf 200 & \bf300 &\bf 400 &\bf 500 &\bf600 	\\
		depth2 		& krkp &0 & 0& 1 & 1 & 2 & 2\\
		depth 2.5 	& krkp & 1& 2& 3 & 5 & 9 & 9\\
		depth 3 	& krkp & 1&3 &  7&  12& 19 & 20\\
		imbalanced 	& krkp &4 & 10& 20 &  30& 52 & 69\\[.3cm]
depth2 		& bc & 0& 1&  1&  1&  2& 4\\
depth2.5 	& bc & 1& 6& 17 & 22 & 48 & 59\\
depth3 		& bc &1 & 4& 6 & 11 & 22 &32 \\
imbalanced 	& bc &2 & 5&  11&  19& 47 & 54\\[.3cm]
		depth2 &  a1a & 2& 5& 10 & 12 & 15 & 18\\
		depth2.5&  a1a & 55&55 & 138 & 157 & 1011 & 824\\
		depth3 &  a1a &174 &1150& 4453 & 6278 & * &* \\
		imbalanced & a1a & 680&2731 & 6145 &  *& * & *
	\end{tabular}
\end{table}

\subsection{Effect of combinatorial branching } 

We next make a comparison to see the effect of the constraint on combinatorial branching  for categorical data which is discussed in Section \ref{sec:limitcombinatorial}.
When using this constraint with $max.card=1$ we recover "simple" branching rules where branching is performed using only one possible value of the 
feature, as is done in  \cite{bertsimas_dunn}. We compare simple branching denoted as {\em simple}, constrained branching using $max.card=2$, denoted by  {\em comb-con} and unconstrained branching, denoted as  {\em comb-unc}.
We only show the results for two data sets, {\em a1a} and {\em mush} because for the other data sets combinatorial branching did not produce different results as
most of the categorical features had only 2 or 3 possible values.
We compare decision trees  of  depths 2 and 3 trained using data sets of size 600.
Results averaged over 5 runs are shown in Table~\ref{oct-vs-odt}.

\comment{\color{blue} Delete/Edit? We call such with optimal classification trees (OCTs) in the sense of Bertsimas and Dunn, where branching is done on one feature at a time,
	as opposed to branching on a group of features simultaneously.
	\begin{table}[h]
		\centering
		\caption{Comparison of OCT and ODT for different tree topologies}
		\label{oct-vs-odt}
		\begin{tabular}{lccccccccc}
			&\multicolumn{2}{c}{\bf Depth 1}	&	\multicolumn{2}{c}{\bf Depth 2}&\multicolumn{2}{c}{\bf Depth 2.5}	&	\multicolumn{2}{c}{\bf Depth 3}\\
			\bf Dataset &\bf simple    &\bf combin. &\bf simple    &\bf combin. &\bf simple    &\bf combin. &\bf simple    &\bf combin.    \\
			a1a 	& 77.8 (75.7) & 78.3 (76.9) & 80.8 (81.7) & 84.0 (80.0) & 81.8 (\bf82.1) & 84.4 (81.5) & 82.2 (81.9) & 85.0 (79.6) \\
			bc		& 90.8 (88.8) & 93.2 (92.2) & 95.2 (93.3) & 96.8 (\bf95.3) & 95.8 (93.3) & 98.8 (93.3) & 96.8 (92.3) & 98.9 (94.2) \\
			krkp    & 69.8 (68.1) & 70.0 (67.5) & 86.8 (86.9) & 87.0 (86.7) & 93.0 (94.0) & 89.0 (86.7) & 94.3 (93.0) & 94.9 (\bf94.6) \\
			mush    & 88.5 (88.7) & 98.4 (98.5) & 95.8 (97.0) & 99.5 (99.3) & 99.6 (99.4) & 100 (99.4 ) & 100 (99.4 ) & 100 (\bf99.5)  \\
			ttt     & 71.6 (68.1) & 69.9 (69.9) & 73.6 (67.2) & 72.4 (67.6) & 77.6 (73.1) & 77.6 (73.1) & 81.0 (73.4) & 79.3 (\bf74.4) \\
			monks-1 & 75.0 (75.1) & 75.1 (74.1) & 78.7 (70.2) & 78.6 (70.4) & 84.4 (74.5) & 84.2 (76.6) & 90.1 (78.7) & 89.8 (\bf80.9) \\
			monks-2 & 67.1 (67.1) & 67.3 (65.3) & 67.1 (67.1) & 67.5 (64.2) & 67.5 (67.2) & 67.5 (67.2) & 67.5 (\bf67.3) & 67.5 (\bf67.3) \\
			monks-3 & 81.2 (74.5) & 80.7 (78.2) & 97.4 (95.7) & 97.3 (96.6) & 100 (\bf100) & 100 (\bf100) & 100 (\bf100) & 100 (\bf100)   \\
			votes	& 95.6 (96.3) & 95.8 (94.1) & 95.9 (\bf97.9) & 96.2 (94.7) & 96.6 (93.6) & 96.6 (94.6) & 96.9 (95.7) & 97.1 (95.9) \\
			heart 	& 79.6 (77.2) & 79.6 (77.2) & 86.7 (86.2) & 86.7 (86.2) & 89.3 (84.0) & 89.3 (84.0) & 89.2 (\bf84.6) & 89.2 (\bf84.6) \\
			student & 92.4 (87.2) & 92.0 (\bf90.6) & 93.2 (87.2) & 92.8 (89.0) & 93.8 (84.6) & 93.1 (87.1) & 93.8 (84.6) & 94.8 (83.1) 
		\end{tabular}
	\end{table}
}

\begin{table}[h]
	\centering
	\caption{The average training (testing) accuracy  for combinatorial vs. simple branching using depth 2 and depth 3 trees}
	\label{oct-vs-odt}
	\begin{tabular}{lccccccc}
		&\multicolumn{3}{c}{\bf Depth 2}	&	\multicolumn{3}{c}{\bf Depth 3}\\
		\bf Dataset &\bf simple    &\bf comb-con &\bf comb-unc&\bf simple       &\bf comb-con &\bf comb-unc   \\
		a1a 	 & 82.2 (80.8) & 82.9 (81.0) & 83.3 (79.9)& 84.0 (80.8)& 84.8 (80.8)& 85.7 (80.1)\\
		mush & 95.8 (95.7) &99.6 (99.4) & 99.6 (99.4)&98.4 (97.7) &99.9 (99.4) &99.9 (99.3)\\

	\end{tabular}
\end{table}

 We see that for {\em mush} using combinatorial branching makes a significant improvement. 
In particular, for depth 3 trees it achieves a 99.4\% out-of-sample accuracy compared to 97.7\% for simple branching. 
We show the optimal  depth 3 tree for {\em mush} dataset in Figure \ref{fig:mushroom}.
However, for {\em a1a}   - even though unconstrained combinatorial branching  achieves good training accuracy they  do not generalize as well as simple branching rules.
In particular, the {\em a1a} dataset contains one group  (occupation) with many different possible values. 
Branching on this group results in combinatorially many possible decisions which leads to overfitting.  Adding a constraint with $max.card=2$ remedies the situation, while still providing a small improvement over simple branching. 

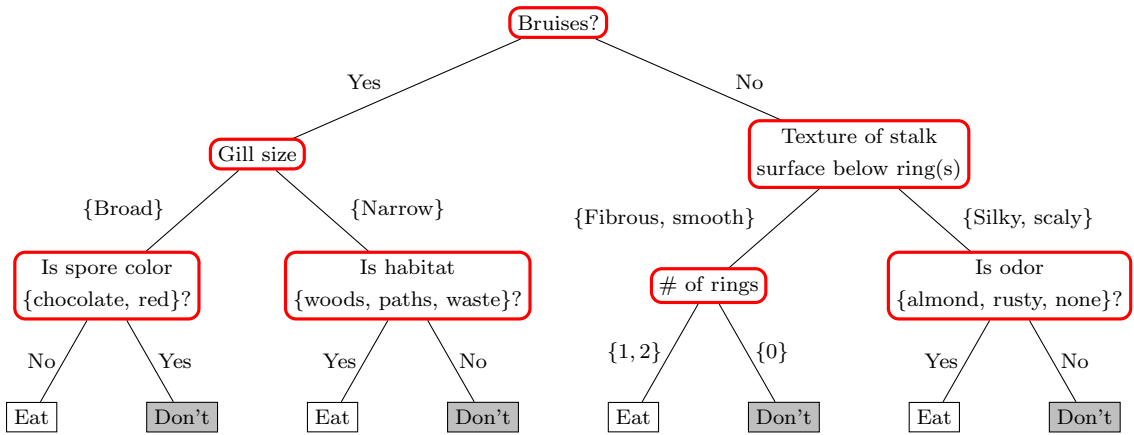
\begin{figure} [hbt] \caption{Optimal depth 3 decision tree for the Mushroom dataset with \%99.4 out of sample accuracy.} \label{fig:mushroom} 
	~\\
	\begin{tikzpicture}
	[
	sibling distance=8cm,  
	level distance = 1.75cm,
	xx/.style = {shape=rectangle, rounded corners, fill=white, draw, draw=red, align=center, very thick},
	eat/.style ={shape=rectangle,draw, fill=white, align=center},
	nay/.style ={shape=rectangle,draw, fill=gray!50, align=center} 
	]
	\node[xx] {Bruises?}
	child { 
		[sibling distance=4cm] node[xx] {Gill size} 
		child { [sibling distance=2cm] node[xx] {Is spore color\\\{chocolate, red\}?} 
			child {node[eat] {Eat} edge from parent node [  left] {No} } 
			child {node[nay] {Don't} edge from parent node [  right] {Yes} } 
			edge from parent node [above  left] {\{Broad\}} 
		}	
		child { [sibling distance=2cm] node[xx] {Is habitat \\\{woods, paths, waste\}?} 
			child {node[eat] {Eat} edge from parent node [  left] {Yes} } 
			child {node[nay] {Don't} edge from parent node [  right] {No} } 
			edge from parent node [above  right] {\{Narrow\}} 
		}
		edge from parent node [ above left] {Yes} 
	}
	child { 
		[sibling distance=4cm] node[xx] {Texture of stalk \\surface below ring(s)}
		child { [sibling distance=2cm]  node[xx] {\# of rings} 
			child {node[eat] {Eat} edge from parent node [  left] {$\{1,2\}$}} 
			child {node[nay] {Don't} edge from parent node [  right] {\{0\}} } 
			edge from parent node [above left] {\{Fibrous, smooth\}}  }
		child { [sibling distance=2cm] node[xx] {Is  odor \\\{almond, rusty, none\}?} 
			child {node[eat] {Eat} edge from parent node [  left] {Yes} }
			child {node[nay] {Don't} edge from parent node [  right] {No} } 
			edge from parent node [ above right] {\{Silky, scaly\}  } }
		edge from parent node [above right] {No} 
	};
	\end{tikzpicture}
\end{figure}

\subsection{Effect of constraints for numerical features.}
 Here we compare the effect of special constraints introduced for the  numerical features  in Section \ref{sec:numericalfeatures}. The results of this comparison are shown 
in Table \ref{numerical-compare}. When the constraint is imposed, the feature group is treated as numerical, and this formulation is label with "n", for numerical. When the constraint is not imposed, then the group is treated as if the original feature is categorical, and the formulation is labeled with "c", for categorical. We compare both accuracy and time averaged from 5 runs with 30 mins limit. 

%
%

\begin{table}[h]
\centering
\caption{The average training (testing) accuracy/solution time with or without constraints for numerical features.}
\label{numerical-compare}
\begin{tabular}{llllll}
\textbf{Dataset} & n/c & \textbf{Depth 2} & \textbf{Depth 2.5} & \textbf{Depth 3} & \textbf{Imbalanced} \\
a1a		& n & 82.9 (81.0)/22 & 84.7 (80.5)/748 & 84.8 (80.8)/1800 & 84.7 (80.7)/1800  \\
			&c & 82.9 (81.0)/24 & 84.5 (81.0)/1191 & 84.7 (80.3)/1800 & 84.8 (80.1)/1800\\[.1cm]
bc		&n & 96.7 (96.6)/6 & 97.5 (95.4)/70 & 97.8 (96.2)/608 & 97.8 (95.6)/1749 \\
			&c & 96.7 (96.0)/6 & 97.8 (94.9)/272 &98.4 (94.7)/1800 & 98.5 (95.5)/1800\\[.1cm]
heloc	&n & 72.0 (71.2)/7 & 73.3 (70.6)/119 & 73.8 (70.0)/515 & 74.1 (69.8)/1788  \\
			&c & 72.9 (70.4)/13.6& 74.9 (68.0)/1711 & 75.9 (68.1)/1800 & 75.6 (68.6)/1800\\[.1cm]
student	&n & 92.1 (90.5)/1 & 92.6 (91.0)/8 & 92.6 (90.5)/25 & 93.1 (91.5)/127\\
			&c & 92.1 (90.5)/1 & 92.6 (91.0)/10 &92.6 (90.5)/62 & 93.1 (91.0)/113
\end{tabular}
\end{table}

We observe that overall adding the special constraint to impose the numerical nature of the group improves the testing accuracy and saves computational time. 

\subsection{Comparison with CART depth 3 trees }\label{sec:comp:topologyA}

We next focus on comparing the accuracy of ODTs with CART. 
We consider 4  different tree topologies  for ODTs:  depth 2, depth 2.5, depth 3 and  imbalanced.
We use CART as implemented in the package rpart for R \cite{rpart}.
We compare the performance of ODT to CART by restricting the maximum depth of the learned CART trees to 3, thus allowing at most 8 leaf nodes, which is the maximum that our trees can have.
We note that this does not mean the learned CARTs have the same topology as our ODTs.

We also investigate the effect of {\em feature selection} by running CART first and  considering only the features used by CART in constructing ODTs.
For each dataset, we generate 5 random training/testing splits of the dataset by sampling without replacement and report the averages.
We use $90\%$ of the data for training CART and we use $\min \{90\%,600\}$ data points for training ODTs. 



In Tables \ref{Cart-compare} and  \ref{Cart-compare3} we show the results for ODTs trained for up to 30 minutes with and without   feature selection, respectively, and compare with CART trees of depth 3. 
In both tables we list the average training and testing accuracy, in percentages, over the 5 runs.  We highlight in bold the best testing accuracy 
achieved by the ODTs if it is more than $1\%$ larger than that achieved by CART, and reversely, highlight accuracy of CART when it is more than
$1\%$ larger than best accuracy of ODT. The standard deviation in all cases is fairly small,  typically around $0.2-0.3\%$.

\begin{table}[h]
\centering
\caption{The average training (testing) accuracy   with 30 mins limit without feature selection.}
\label{Cart-compare}
\begin{tabular}{llllll}
\textbf{Dataset}             & \textbf{Depth 2} & \textbf{Depth 2.5} & \textbf{Depth 3} & \textbf{Imbalanced} & \textbf{CART-D3}\\
a1a & 82.9 (80.9) & 84.7 (80.5) & 84.7 (\textbf{81.0}) & 85.2 (80.0) & 82.0 (79.3) \\
bc & 96.7 (\textbf{96.6}) & 97.5 (95.6) & 97.8 (94.9) & 97.9 (96.4) & 96.0 (94.6)\\
heloc & 72.4 (69.8) & 73.2 (70.0) & 73.7 (69.6) & 73.0 (68.8)  & 70.8 (71.0)  \\
krkp & 86.7 (87.0) & 93.2 (93.9) & 93.3 (93.9) & 94.1 (\textbf{94.1}) &90.4 (90.3) \\
mush & 99.6 (99.4) & 100.0 (99.5) & 100.0 (99.7) & 100.0 (99.6) & 99.4 (99.3) \\
ttt & 71.8 (67.7) & 77.0 (72.7) & 79.3 (74.2) & 81.9 (\textbf{79.5}) & 75.3 (73.1)\\
monks-1 & 78.2 (74.1) & 84.1 (76.8) & 89.6 (82.3) & 100.0 (\textbf{100.0}) & 76.6 (76.8) \\
votes & 96.2 (95.5) & 96.9 (93.6) & 97.4 (94.1) & 98.0 (95.0) & 95.7 (95.9) \\
heart & 85.5 (88.1) & 88.6 (89.6) & 88.7 (89.6) & 90.7 (85.2) &88.5 (\textbf{91.1}) \\
student & 92.8 (\textbf{91.0}) & 93.1 (91.0) & 93.3 (89.0) & 93.4 (89.5) & 89.5 (86.0)
\end{tabular}
\end{table}


\begin{table}[h]
\centering
\caption{The average training (testing) accuracy  with 30 mins limit with feature selection} 
\label{Cart-compare3}
\begin{tabular}{llllll}
\textbf{Dataset}             & \textbf{Depth 2} & \textbf{Depth 2.5} & \textbf{Depth 3} & \textbf{Imbalanced} & \textbf{CART-D3}\\
a1a & 82.9 (\textbf{81.0}) & 84.7 (80.5) & 84.8 (80.8) & 84.7 (80.7) & 82.0 (79.3) \\
bc & 96.7 (\textbf{96.6}) & 97.5 (95.4) & 97.8 (96.2) & 97.8 (95.6) & 96.0 (94.6) \\
heloc & 72.0 (71.2) & 73.3 (70.6) & 73.8 (70.0) & 74.1 (69.8) & 70.8 (71.0) \\
krkp & 86.7 (87.0) & 93.2 (93.9) & 93.2 (\textbf{93.9}) & 94.6 (93.8) & 90.4 (90.3) \\
mush & 99.6 (99.4) & 99.9 (99.4) & 99.9 (99.4) & 100.0 (99.6) & 99.4 (99.3) \\
ttt & 71.8 (67.7) & 77.0 (72.7) & 79.3 (74.2) & 81.9 (\textbf{79.5}) & 75.3 (73.1) \\
monks-1 & 78.2 (74.1) & 84.1 (76.8) & 89.6 (82.3) & 100.0 (\textbf{100.0}) & 76.6 (76.8) \\
votes & 95.9 (95.5) & 96.3 (95.0) & 96.7 (95.0) & 97.3 (96.8) &  95.7 (95.9) \\
heart & 85.5 (88.1) & 88.6 (90.4) & 88.6 (90.4) & 90.2 (88.9) & 88.5 (91.1) \\
student & 92.1 (90.5) & 92.6 (91.0) & 92.6 (90.5) & 93.1 (\textbf{91.5}) & 89.5 (86.0)
\end{tabular}
\end{table}
In Table \ref{Cart-compare} we see that testing accuracy achieved by ODTs after 30 minutes of training is significant better than that of depth 3 CART. Comparing Tables \ref{Cart-compare} and  \ref{Cart-compare3}, we see that on average the feature selection typically degrades training accuracy but results in better testing accuracy. This can be explained by the fact that reducing the number of features prevents the ODTs from overfitting. 
This observation suggest that using feature selection, especially  for larger trees could be beneficial not only for computational speedup but for better accuracy.

 We next repeat the same experiments from Tables \ref{Cart-compare} and \ref{Cart-compare3} with a 5 minute time limit  on Cplex and report the results in Tables  \ref{Cart-compare4} and \ref{Cart-compare6}. Note that the time for feature selection is negligible. 
 
\begin{table}[h]
\centering
\caption{The average training (testing) accuracy   with 5 mins limit  without feature selection.}
\label{Cart-compare4}
\begin{tabular}{llllll}
\textbf{Dataset}             & \textbf{Depth 2} & \textbf{Depth 2.5} & \textbf{Depth 3} & \textbf{Imbalanced} & \textbf{CART-D3}\\
a1a & 82.9 (\textbf{80.9}) & 84.5 (80.6) & 84.4 (80.9) & 83.5 (80.4) & 82.0 (79.3) \\
bc & 96.7 (\textbf{96.6}) & 97.5 (95.6) & 97.7 (96.4) & 97.6 (96.2) & 96.0 (94.6) \\
heloc &  72.4 (69.8) & 72.0 (69.1) & 66.2 (65.0) & 58.2 (57.6)  & 70.8 (71.0)  \\
krkp & 86.7 (87.0) & 93.2 (\textbf{93.9}) & 92.1 (92.1) & 92.9 (92.9) & 90.4 (90.3) \\
mush & 99.6 (99.4) & 100.0 (99.5) & 100.0 (99.7) & 100.0 (99.7) & 99.4 (99.3) \\
ttt & 71.8 (67.7) & 77.0 (72.7) & 78.7 (74.0) & 77.5 (\textbf{75.0}) &75.3 (73.1) \\
monks-1 & 78.2 (74.1) & 84.1 (76.8) & 89.6 (82.3) & 100.0 (\textbf{100.0}) & 76.6 (76.8) \\
votes & 96.2 (95.5) & 96.9 (93.6) & 97.3 (94.5) & 97.5 (92.7) & 95.7 (95.9) \\
heart & 85.5 (88.1) & 88.6 (89.6) & 88.7 (89.6) & 90.4 (88.1) & 88.5 (\textbf{91.1}) \\
student & 92.8 (91.0) & 93.0 (\textbf{91.5}) & 93.0 (90.0) & 87.7 (86.5) & 89.5 (86.0)
\end{tabular}
\end{table}


\begin{table}[h]
\centering
\caption{The average training (testing) accuracy   with 5 mins limit with feature selection}
\label{Cart-compare6}
\begin{tabular}{llllll}
\textbf{Dataset}             & \textbf{Depth 2} & \textbf{Depth 2.5} & \textbf{Depth 3} & \textbf{Imbalanced} & \textbf{CART-D3}\\
a1a & 82.9 (\textbf{81.0}) & 84.6 (80.4) & 84.4 (80.2) & 84.6 (80.7) & 82.0 (79.3) \\
bc & 96.7 (\textbf{96.6}) & 97.5 (95.4) & 97.7 (95.8) & 97.7 (95.2) & 96.0 (94.6) \\
heloc & 72.0 (71.2) & 73.3 (70.6) & 73.7 (69.7) & 73.5 (70.9)  & 70.8 (71.0)  \\
krkp & 86.7 (87.0) & 93.2 (93.9) & 93.2 (\textbf{93.9}) & 94.6 (93.8) & 90.4 (90.3) \\
mush & 99.6 (99.4) & 99.9 (99.4) & 99.9 (99.4) & 99.9 (99.4) & 99.4 (99.3) \\
ttt & 71.8 (67.7) & 77.0 (72.7) & 78.7 (74.0) & 77.5 (\textbf{75.0}) & 75.3 (73.1) \\
monks-1 & 78.2 (74.1) & 84.1 (76.8) & 89.6 (82.3) & 100.0 (\textbf{100.0}) & 76.6 (76.8) \\
votes & 95.9 (95.5) & 96.3 (95.0) & 96.7 (95.0) & 97.3 (96.8) & 95.7 (95.9) \\
heart & 85.5 (88.1) & 88.6 (90.4) & 88.6 (90.4) & 90.2 (88.9) & 88.5 (91.1) \\
student & 92.1 (90.5) & 92.6 (91.0) & 92.6 (90.5) & 93.1 (\textbf{91.5}) & 89.5 (86.0)
\end{tabular}
\end{table}

Comparing Tables \ref{Cart-compare} and  \ref{Cart-compare4}, we do not see a significant difference in accuracy for depth 2 and depth 2.5 ODTs due to the reduction of the time limit from 30 minutes to 5 minutes. 
For depth 3 ODTs, and the imbalanced trees however, both training and testing performance gets noticeably worse due to the reduction of the time limit.
Comparing Tables \ref{Cart-compare4} and  \ref{Cart-compare6}, we see that in most cases feature selection helps in terms of both training and testing accuracy.

Overall the testing accuracy degrades between  \ref{Cart-compare} and  \ref{Cart-compare6}, but not very significantly, thus  we conclude that feature selection helps for larger trees independent of the time limit. 
Moreover,  average testing accuracy of ODTs obtained only after 5 minutes  of computation using feature selection seems to be  similar to testing accuracy with 30 minute time limit (with or without feature selection) and thus still outperforms CART.
We should also note that when the IPs are terminated earlier, the optimality gap is usually larger but it often happens that an optimal or a near optimal integral solution  is already obtained by CPLEX.

\subsection{Effect of training set size}

To demonstrate the effect of the training set size on the resulting testing accuracy we present the appropriate comparison in 
Table \ref{sample-compare-d3}. In these experiments we run Cplex with a 30 minute time.

\begin{table}[h]
\centering
\caption{Comparison of training  (testing) accuracy across training data sizes  with 30 minutes limit and feature selection}
\label{sample-compare-d3}
\begin{tabular}{lcrrrrrrr}
\textbf{Dataset}  & \textbf{Topology} & \textbf{600~}& \textbf{1200~} & \textbf{1800~} & \textbf{2400~} \\
a1a & 2 &82.9 (81.0)& 82.4 (79.3) & 82.0 (79.6) & 82.0 (79.6) \\
krkp & 2 &86.7 (87.0)& 86.8 (87.0) & 86.8 (87.1) & 86.8 (87.2) \\
mush & 2&99.6 (99.4) & 99.5 (99.4) & 99.4 (99.4) & 99.4 (99.4) \\
heloc & 2 &72.0 (71.2)&72.2 (70.8)& 71.9 (71.2) & 71.7 (71.2)\\[.2cm]

a1a & 2.5 &84.7 (80.5)& 83.7 (80.0) & 83.4 (79.6) & 83.4 (79.6) \\
krkp & 2.5 & 93.2 (93.9)& 93.8 (93.8) & 93.6 (94.0) & 93.7 (94.1) \\
mush & 2.5 &99.6 (99.4)& 99.8 (99.5) & 99.7 (99.6) & 99.7 (99.6) \\
heloc & 2.5 &73.3 (70.6)&73.1 (70.9)& 72.6 (70.6) & 72.3 (71.5) \\[.2cm]

a1a & 3 &84.7 (80.7)& 83.6 (79.6) & 83.3 (80.2) & 83.3 (80.2) \\
krkp & 3&94.6 (93.8)& 93.8 (93.8) & 93.6 (94.0) & 93.7 (94.1) \\
mush & 3&100.0 (99.6) & 99.9 (99.6) & 99.9 (99.7) & 99.8 (99.8) \\
heloc & 3 &73.8 (70.0)&73.5 (70.9)& 72.9 (71.3) & 72.5 (71.4) \\[.2cm]

a1a & IB &84.8 (80.8)& 83.6 (79.2) & 82.5 (79.6) & 82.2 (79.0) \\
krkp & IB &93.2 (93.9)& 94.5 (93.7) & 94.2 (93.9) & 94.1 (94.1) \\
mush & IB &99.9 (99.4)& 100.0 (99.8) & 100.0 (100.0) & 100.0 (100.0) \\
heloc & IB &74.1 (69.8)&73.2 (71.0)& 72.1 (71.4) & 72.0 (71.4)
\end{tabular}
\end{table}

We observe that in most cases increasing the size of the training data narrows the gap between training and testing accuracy. This can happen for two reasons -  because  optimization progress slows down and training accuracy drops and/or because there is less overfitting. 
For example, for {\em a1a} it appears to be harder to find the better tree and so both the training and the testing accuracy drops, while for {\em mush} testing accuracy gets better, as the gap between training and testing accuracy closes.  We also see, for example in the case of  {\em mush} and {\em krkp},  the 
effect of the increase of the data set tends to diminish as the gap between training and testing accuracy. 
This is a common behavior for machine learning models, as larger training data tends to be more representative with respect to the entire data set. However, in our case,
we utilize the larger data set to perform prior feature selection and as a result 
relatively small training sets are often sufficient for training of the ODTs. 
Hence, the  computational burden of solving IPs to train the ODTs is balanced by the lack of need to use large training sets. 

\subsection{Choosing the tree topology}\label{sec:comp:topology}

		
In this section we discuss how to chose the best tree topology via cross-validation and compare the accuracy obtained by  the chosen topology
to the accuracy of trees obtained by CART with cross-validation.

For each dataset we randomly selected $90\%$ of the data points to use for training and validation, leaving the remaining data for final testing.  For the smaller data sets, we select the best topology using standard $5$-fold cross validation. For large data sets such as  {\em a1a, bc, krkp, mush} and {\em ttt}, we instead repeat the following experiment 5 times: we randomly select $600$ data points as the training set and train a tree of each topology on this set. The remaining data is used as a validation set   and
 we  compute the accuracy of each trained tree on this set.   After $5$ experiments, we select the topology that has the best average validation accuracy. 
 We then retrain the tree with this topology  and report the testing accuracy using the hold-out $10\%$.   We train CART with $90\%$ of the data points, allowing it to choose the tree depth using its default setting and then report the testing accuracy using the hold-out set. 
 We summarize the results in Table \ref{cv-compare} where for each method we  list the average testing accuracy and the average  number of leaves in the tree chosen via cross-validation. We set ODT time limit to 30 mins and used feature selection from CART trained on $90\%$ of each dataset.

\begin{table}[h]
\centering
\caption{Comparison of  testing accuracy and size of cross validated trees vs. CART }
\label{cv-compare}
\begin{tabular}{lcccccc}
\bf Dataset & \bf ODT &\bf ave. \# of leave &\bf	CART 	 &\bf ave. \# of  leaves \\
a1a		& { \bf 80.9}   & \bf 6.8        	       & 79.6       & 9.6\\
bc 		&{ \bf 96.0}    & { 4.8}            & 94.9        & \bf 4.2\\
heloc   &\bf 71.4          &4.8                   & 71.0       & \bf 3.6\\
krkp    &93.6          & { \bf 6.8}             &{ \bf 96.6}   & 9\\
mush	&\bf 99.8          & { 7.6}    	       & 99.3	     &\bf 3\\
ttt		&81.0	       & { \bf 8.0}     & \bf 93.1 &20.2\\
monks-1	& \bf 100.0            & { \bf 8.0}    & { 82.3}	 &8.6	              	\\
votes	&{ \bf 95.7}	   & 7.2          & 95.5         &\bf 2.4                \\
heart   &  \bf 89.6        & {7.2}     &  {88.9}   & \bf 5         \\
student & \bf 90.5         &\bf 4.8           &  86.0           &6.2           
\end{tabular}
\end{table}

We can summarize the results in Table \ref{cv-compare} as follows: in most cases, either ODTs outperform CARTs  in terms of accuracy or else they tend to have a significantly simpler structure than the CART trees. In particular, for data sets {\em a1a}, {\em student} and {\em bc}   that contain interpretable human-relatable data, ODTs perform better in terms  accuracy and better or comparably in interpretability, undoubtedly because there exist simple shallow trees that make good predictors for such data sets, and the exact optimization method such as ours can find such trees, while a heuristic, such as CART may not. 
 On the other hand, on the dataset {\em ttt} (which describes various positions in a combinatorial game), simple $2$ or $3$ levels of decision are simply not enough to predict the game outcome. In this case, we see that CART can achieve better accuracy, but at the cost of using much deeper trees. 
A similar situation holds for  {\em krkp}, but to a lesser extent.  Finally, {\em monks-1} data set is an artificial data set, classifying robots using simple features describing parts of each robot. Classification
in {\em monks-1}  is based on simple rules that can be modeled using shallow trees and ODT performance is much better on that data set than that of CART. 
In conclusion, our results clearly demonstrate that when classification can be achieved by a small interpretable tree, ODT outperforms CART in accuracy and interpretability.

\subsection{Training depth-2 tree on full {\em heloc} data.}
We performed a more detailed study of the {\em heloc} data set which was introduced in the FICO interpretable machine learning competition  \cite{FICO}. 
The authors of  the winning approach
\cite{fico_oktay} produced a model for this data set which can be represented  as a depth-2 decision tree achieving $71.7$ testing accuracy. Here we show how we are able to obtain comparable results with our approach. First we applied feature selection using CART, making sure that at least 4 features are selected. Then we trained a depth-2 tree using our ODT model and $90\%$ of the data points ($8884$ points). The optimal solution was obtained within $405$ seconds and the resulting testing accuracy is $71.6$. The corresponding CART model gives $71.0$ testing accuracy. 

\subsection{Results of maximizing sensitivity/specificity}\label{sec:sensspec_results}
We now present computational results related to the maximization of sensitivity or specificity, as discussed in Section \ref{sec:sensspec}.
We will focus on the {\em bc} dataset, which contains various measurements of breast tumors. The positive examples in this data sets are the individuals with 
malignant tumors in the breast. Clearly, it is vitally important to correctly identify all (or almost all) 
positive examples, since missing a positive example may result in sending a individual who may need cancer treatment home without recommending further tests or treatment. On the other hand,  placing a healthy individual into the malignant group, while undesirable, is less damaging, since further tests will simply correct the error. Hence, the goal should be maximizing specificity, while 
constraining sensitivity. Of course, the constraint on the sensitivity is only guaranteed on the training set. 
In Table \ref{bcd3}  we present the results of solving such model using $\min(\lceil.9n\rceil, 600)$  samples and the resulting testing sensitivity (TPR) and specificity (TNR). 
We report average and variance over 30 runs. 


\begin{table}[h]
	\centering
	\caption{TPR vs. TNR, breast cancer data, depth 2 and depth 3  trees }
	\label{bcd3}
	\begin{tabular}{l|ll|c c l|ll|l}
		\multicolumn{4}{c}{\bf Depth 2}	&&	\multicolumn{4}{c}{\bf Depth 3}\\
		\multicolumn{2}{c}{\bf Training}	&	\multicolumn{2}{c}{\bf Testing}&&\multicolumn{2}{c}{\bf Training}	&	\multicolumn{2}{c}{\bf Testing}\\
		\bf TPR	& \bf TNR & \bf TPR&	\bf	TNR &~~~&\bf TPR	&\bf TNR &\bf 	TPR&\bf	TNR\\
		100	&79.6&99.1	&		76.8&&	100&	91.6&		97.2&		83.6\\
		99.5&85.4&	98.9&		82.4&&	99.5&	94.6&		97.4&		89.7\\
		99	&89.5&	97.7&		89.4&&	99	&	97.2&		96.8&		90.0\\
		98.5&92	 &	98.1&		90.9&&	98.5&	97.2&		97.2&		90.9\\
		98	&92.7&	97.7&		91.0&&	98	&	98.7&		96.4&		94.6\\
		97	&95.8&	97.5&		94.7&&	97	&	99.4&		96.6&		96.1\\
		96	&97.3&	96.4&		93.9&&	96	&	99.9&		94.2&		94.7\\
		95	&98.4&	96.2&		98.0&&	95	&	100.0&		93.9&		93.0\\
	\end{tabular}
\end{table}

We observe that, while depth-2 trees deliver worse specificity in training than depth-3 trees, they have better generalization and hence closely maintain the desired true positive rate.  This is also illustrated in Figure \ref{fig:SpecD1}.

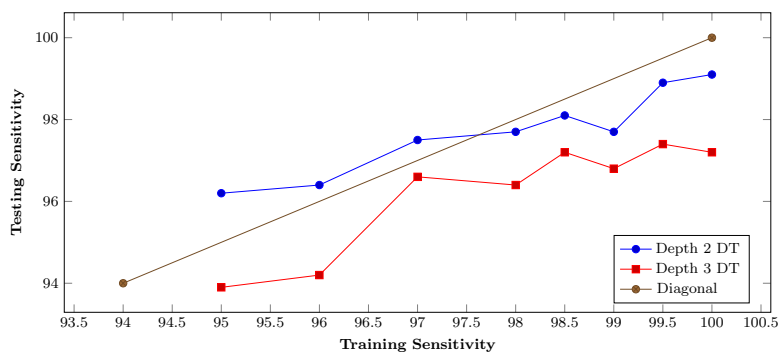
\begin{figure} [hbt] \caption{Breast Cancer Data, Training v.s. Testing Sensitivity} \label{fig:SpecD1}
	\vskip.5cm
	\begin{center}
		\begin{tikzpicture}[scale=0.7]
		\begin{axis}[
		xlabel={\bf Training Sensitivity},
		ylabel={\bf Testing Sensitivity},
		legend pos=south east,
		width=15cm,
		height=\axisdefaultheight]
		\addlegendentry{Depth 2 DT}
		\addplot coordinates {(100,	99.1)(99.5,98.9)(99,97.7)(98.5,	98.1)(98,97.7)(97,97.5)(96,96.4)(95,96.2)};
		\addlegendentry{Depth 3 DT}
		\addplot coordinates {(100,97.2)(99.5,97.4)(99,96.8)(98.5,97.2)(98,96.4)(97,96.6)(96,94.2)(95,93.9)	};
		\addlegendentry{Diagonal~~~~~}
		\addplot coordinates {(100,100)(94,94)	};
		\end{axis}
		\end{tikzpicture}
	\end{center}
\end{figure}

\section{Concluding remarks}

We have proposed an integer programming formulation for constructing optimal binary classification trees for data consisting of categorical features. 
This integer programming formulation takes problem structure into account and, as a result, the number of integer variables in the formulation is independent of the size of the training set. We show that the resulting MILP can be solved to optimality in the case of small decision trees; in the case of larger topologies, a good solution can be obtained within a set time limit. 
We show that our decision trees tend to outperform those produced by CART, in accuracy and/or interpretability. 
Moreover, our formulation can be extended to optimize specificity or sensitivity instead of accuracy, which CART cannot do. 

Our formulation is more specialized than that proposed recently in  \cite{bertsimas_dunn} and is hence is easier to solve by an MILP solver. However, our model 
allows flexible branching rules for categorical variables, 
as those allowed by CART.  In addition the formulations proposed in \cite{bertsimas_dunn} are not particularly aimed at interpretability.

Several extensions and improvements should be considered in future work. 
For example, while the number of integer variables does not depend on the size of the training set, the number of continuous variables and the problem difficulty increases with the training set size. 
Hence, we plan to consider various improvements to the solution technique which may considerably reduce this dependence.

\bibliographystyle{plain}

\end{document}